\newcolumntype{P}[1]{>{\centering\arraybackslash}p{#1}}
\begin{document}




\newcommand{\PV}{{\rm P{\scshape os}V{\scshape er}}}
\newcommand{\NV}{{\rm N{\scshape ec}V{\scshape er}}}

\newcommand{\IPVad}{{\rm IP{\scshape os}V{\scshape er}}$^\ad$}
\newcommand{\IPVpr}{{\rm IP{\scshape os}V{\scshape er}}$^\pr$}
\newcommand{\IPVco}{{\rm IP{\scshape os}V{\scshape er}}$^\co$}
\newcommand{\INVpr}{{\rm IN{\scshape ec}V{\scshape er}}$^\pr$}
\newcommand{\INVgr}{{\rm IN{\scshape ec}V{\scshape er}}$^\gr$}

\newcommand{\A}{\mathcal{A}}
\newcommand{\R}{\mathcal{R}}
\newcommand{\U}{\mathcal{U}}
\newcommand{\C}{\mathcal{C}}
\newcommand{\fc}{$C$}

\newcommand{\cf}{{\tt cf}}
\newcommand{\ad}{{\tt ad}}
\newcommand{\sta}{{\tt st}}
\newcommand{\co}{{\tt co}}
\newcommand{\gr}{{\tt gr}}
\newcommand{\pr}{{\tt pr}}

\newcommand{\fiba}{\overline{\Gamma}}

\newcommand{\norma}{shared}
\newcommand{\normb}{common}

\newcommand{\stability}{{\rm {\scshape stability}}}
\newcommand{\wstability}{{\rm {\scshape weak-stability}}}

\newcommand{\relevance}{{\rm {\scshape relevance}}}
\newcommand{\srelevance}{{\rm {\scshape strong-relevance}}}

\newcommand{\true}{true}
\newcommand{\false}{false}

\newcommand{\unatt}{\sim}

\newtheorem{fact}{Fact}

\title{Relevance for Stability of Verification Status of a Set of Arguments in Incomplete Argumentation Frameworks (with Proofs)}
\titlerunning{Relevance for Stability of Verification Status in IAFs}
%
\author{Anshu Xiong\inst{1,2} \and
Songmao Zhang\inst{1}}
\authorrunning{A. Xiong et al.}
%
\institute{State Key Laboratory of Mathematical Sciences, Academy of Mathematics and Systems Science, Chinese Academy of Sciences, Beijing 100190, China\and University of Chinese Academy of Sciences, Beijing 100049, China \\ 
\email{xionganshu21@mails.ucas.ac.cn, smzhang@math.ac.cn}}
\maketitle              
\begin{abstract}
The notion of relevance was proposed for stability of justification status of a single argument in incomplete argumentation frameworks (IAFs) in 2024 by Odekerken et al. To extend the notion, we study the relevance for stability of  verification status of a set of arguments in this paper, i.e., the uncertainties in an IAF that have to be resolved in some situations so that answering whether a given set of arguments is an extension obtains the same result in every completion of the IAF. Further we propose the notion of strong relevance for describing the necessity of resolution in all situations reaching stability. An analysis of complexity reveals that detecting the (strong) relevance for stability of sets of arguments can be accomplished in P time under the most semantics discussed in the paper. We also discuss the difficulty in finding tractable methods for relevance detection under grounded semantics.
\keywords{Abstract argumentation \and Incomplete knowledge \and Relevance \and Computational complexity}
\end{abstract}
\section{Introduction}
    As a groundbreaking work for modeling argumentation, Dung’s {\it Abstract Argumentation Framework} (AF) \cite{dung1995acceptability}  represents attack relations among a group of arguments where justified sets of arguments can be computed under various types of semantics. Argumentation can be viewed as a dynamic process where not all arguments and attacks are known in advance. {\it Incomplete Argumentation Framework (IAF)} \cite{baumeister2018verification} is a prominent proposal for introducing qualitative uncertainty to AF for modeling dynamic argumentation settings. An IAF contains not only certain arguments and attacks but also uncertain ones whose existence is unknown at the moment. By investigating the existence of each uncertain element, various AFs can be reached called {\it completions} of the IAF.

    Initially studied in \cite{testerink2019method,odekerken2022approximating}, {\it stability} refers to that the question of interest has the same answer in any situation that will happen in the future, for instance any completion in the IAF setting. Given some semantics $\sigma$, issues about stability in IAFs have been concerned with different goals including stability of: justification of an argument $a$ \cite{mailly2020stability,baumeister2021acceptance,odekerken2023justification}, which refers to acceptable status of $a$ ranging over $\{sceptical,credulous\}\times\{in, out, undec\}$ under $\sigma$; verification of a set of arguments $S$ \cite{baumeister2018verification,fazzinga2020revisiting}, which tells whether $S$ is a $\sigma$-extension; and the whole extensions \cite{xiong2024stability}, which refers to the set of all extensions under $\sigma$ of an AF.
    
    Practically, an IAF being stable means that there is no need to investigate the existence of the current uncertain elements. Meanwhile, the notion of stability also indicates that during the dynamics process from an IAF towards one of its completions, as long as we desire a fixed answer for the question of interest from the completion, then we need to resolve the current uncertainties until stability is reached. Therefore, it is of importance to explore how to resolve uncertainties in IAF so as to reach stability. Such a problem is recently studied in \cite{odekerken2023justification} where the notion of {\it relevance} is proposed. Relevance characterizes which uncertainties need to be resolved in order to reach stability. For instance, addition of an uncertain argument (or attack) $e$ is said to be relevant w.r.t. IAF $I$ if there exists an IAF 'specified' from $I$ where the existence of all uncertain elements is decided except $e$, while the stability of interest is not reached yet and adding $e$ will lead the reaching.
    
    In \cite{odekerken2023justification}, the relevance problem is discussed for stability of justification status of a given argument. We believe that relevance is worth exploring for other types of stability, and focus on stability of verification status of a given set of arguments in this paper. Essentially, this relevance is distinctive from the relevance in \cite{odekerken2023justification} due to the inherent differences across the two types of stability. Further, for an uncertain element whose addition or removal is relevant, it is also interesting to validate whether such an action is necessary to be done in any situation for reaching stability. We define such a stricter form as {\it strong relevance}. Generally, relevance reveals the need of resolutions in some situation for reaching stability, whereas strong relevance describes the necessity of resolutions in all situations.

    The contribution of this paper consists of (1) extending the relevance problem proposed in \cite{odekerken2023justification} to relevance for stability of verification status of a set of arguments; and (2) proposing the notion of strong relevance for stability in IAFs. Specifically, we analyze the complexity for deciding two relevance problems under five common semantics including admissible, stable, complete, grounded and preferred semantics. For admissible, stable and complete semantics, we find tractable methods for detecting both relevance and strong relevance, showing that they are all in $P$ complexity class. For preferred semantics, we give the upper bound and prove the lower bound of two relevance problems by constructing reductions from known problems in IAFs and SAT problems. The difficulty of finding tractable method for solving relevance problems under grounded semantics is also discussed. Overall our results show that deciding relevance for stability of verification of a set of arguments is simpler than that of justification of an argument, whose complexity is up to exponential level under all the common semantics as proven in \cite{odekerken2023justification}. 

    The paper is organized as follows. In Section 2, we provide a brief, necessary introduction to basic definitions about AF, IAF and stability problems of verification in IAFs. In Section 3, we define and give a complexity analysis of the relevance problem for stability of verification status of a set of arguments. In Section 4, we introduce strong relevance and also study the complexity of its decision problems. Then in Section 5, the related work as well as related issues are discussed including the difficulty in tackling the relevance problem under grounded semantics. Lastly the paper is concluded in Section 6.
\section{Preliminaries}
\paragraph{Argumentation Framework (AF).}An {\it abstract argumentation framework} \cite{dung1995acceptability} is a directed graph $F=\langle \A, \R\rangle$ where $\A$ represents a set of considered arguments and $\R\subseteq\A\times\A$ the set of attacks between arguments in $\A$. We say that $a$ {\it attacks} $b$ if $(a,b)\in\R$, $a$ {\it attacks} $S\subseteq\A$ if $\exists b\in S,(a,b)\in\R$, and the meaning of $S$ {\it attacks} $a$ is analogous. Given a set of arguments $S\subseteq\A$, let $S^+_F=\{a\in\A\mid S$ attacks $a\}$ and $S^-_F=\{a\in\A\mid a$ attacks $S\}$, and $S$ is {\it conflict-free} iff $S\cap S^+_F=\emptyset$. We say that $S$ {\it defends} $a$ if all the attackers of $a$ are attacked by $S$ and use the so-called {\it characteristic function} $\Gamma_F(S)$ \cite{dung1995acceptability} to denote all the arguments that $S$ defends, i.e., $\Gamma_F(S)=\{a\in\A\mid S$ defends $a$ in $F\}$.
    \paragraph{Semantics of AF.}A {\it semantics} $\sigma$ is a function of which the input is an AF $F=\langle \A, \R\rangle$ and the output $\sigma(F)$ a set of subsets of $\A$, where every element of $\sigma(F)$ is called a {\it $\sigma$-extension} of $F$. The common semantics {admissible, stable, complete, grounded and preferred} semantics (abbr. $\ad$, $\sta$, $\co$, $\gr$, $\pr$) are originally proposed in \cite{dung1995acceptability} and defined as follows.
\begin{definition}\label{sem}
    Given an AF $F=\langle \A, \R\rangle$ and $S\subseteq\A$,\\
    1. $S\in\ad(F)$ iff $S$ is conflict-free and $S\subseteq\Gamma_F(S)$;\\
    2. $S\in\sta(F)$ iff $S$ is conflict-free and $S^+_F=\A\setminus S$;\\
    3. $S\in\co(F)$ iff $S\in\ad(F)$ and $\forall a\in\Gamma_F(S), a\in S$;\\
    4. $S\in\gr(F)$ iff $S$ is $\subseteq$-minimal in $\co(F)$; and\\
    5. $S\in\pr(F)$ iff $S$ is $\subseteq$-maximal in $\ad(F)$.\\
\end{definition}
    Given an AF $F$, a set of arguments $S$ of $F$ and semantics $\sigma$\footnote{In this paper we limit the semantics discussed to the  
    five common semantics in Definition \ref{sem}, and unless specifically stated otherwise the given semantics $\sigma$ in context ranges over $\{\ad$,$\sta$,$\co$,$\gr$,$\pr\}$.}, the notion of {\it verification status} is defined as follows: (the verification status of) $S$ is $\sigma$-$\true$ (resp., -$\false$) iff $S\in\sigma(F)$ (resp., $S\notin\sigma(F)$).
    
    Now we recall the notion of IAF expanding AF with qualitative uncertainty. 
\begin{definition}[Incomplete Argumentation Framework \cite{baumeister2018verification}]
      An incomplete argumentation framework (IAF) is a quadruple $\langle\A, \A^?, \R, \R^?\rangle$ where $\A$ and $\A^?$ are disjoint sets of arguments and $\R$ and $\R^?$ disjoint subsets of $(\A\cup\A^?)\times(\A\cup\A^?)$. $\A$ (resp., $\R$) represents arguments (resp., attacks) that are known to certainly exist, while $\A^?$ (resp., $\R^?$) contains additional arguments (resp., attacks) whose existence is yet uncertain. An IAF is called an AtIAF (resp., ArIAF) if it has no uncertain arguments (resp., attacks). 
\end{definition}\label{part}
    The {\it partial completions} \cite{odekerken2023justification} of an IAF $I$ represent the possible IAFs that $I$ can be specified to be, i.e., by remaining the certain parts and deciding some of the uncertain elements to be existent or not.
\begin{definition}[Partial completion]\cite{odekerken2023justification}
    Given an IAF $I=\langle\A,\A^?,\R,\R^?\rangle$, a partial completion is an IAF $I'=\langle\A',\A^{?'},\R',\R^{?'}\rangle$ where:\\
    - $\A\subseteq \A' \subseteq \A\cup\A^?$;\\
    - $\R\cap (\A'\cup\A^{?'})\times(\A'\cup\A^{?'})\subseteq \R' \subseteq \R\cup\R^?$;\\
    - $\A^{?'}\subseteq \A^?$; and\\
    - $\R^{?'}\subseteq \R^?$.
  \end{definition}  
    Note that $\A'\cap\A^{?'}=\emptyset, \R'\cap\R^{?'}=\emptyset, \R'\subseteq(\A'\cup\A^{?'})\times(\A'\cup\A^{?'})$ and $\R^{?'}\subseteq(\A'\cup\A^{?'})\times(\A'\cup\A^{?'})$ because $I'$ is an IAF. We use $part(I)$ to denote the set containing all of the partial completions of $I$, and $cert(I)=\langle\A,\R\cap(\A\times\A)\rangle$ to denote the AF projected on the certain parts of $I$. 
    We can see that by partial completion, the notion of $\it completion$ \cite{baumeister2018verification} can be alternatively defined as that an AF $F$ is a completion of an IAF $I$ iff there is a partial completion $I'$ of $I$ such that $F=cert(I')$.

    Given an IAF $I=\langle\A,\A^?,\R,\R^?\rangle$ and a set of arguments $S\subseteq\A\cup\A^?$, similarly to notations for AF, we give the following notations:\\
    - $S^+_I =\{ a\in \A\cup\A^?\mid \exists b \in S,(b,a)\in \R \}$;\\
    - $S^-_I=\{ a\in \A\cup\A^?\mid \exists b \in S,(a,b)\in \R \}$; and\\
    - $S^\unatt_I=\{ a\in \A\cup\A^?\mid \forall b \in S,(b,a)\notin\R\cup\R^?\}$.
    
    Further, in order to concisely describe the changing of IAF $I$, given a set of attacks $\R_0\subseteq\R^?$ or a set of arguments $\A_0\subseteq\A^?$, let:\\
    - $I+\R_0=\langle\A,\A^?,\R\cup\R_0,\R^?\setminus\R_0\rangle$;\\
    - $I-\R_0=\langle\A,\A^?,\R,\R^?\setminus\R_0\rangle$;\\
    - $I+\A_0=\langle\A\cup\A_0,\A^?\setminus\A_0,\R,\R^?\rangle$; and\\
    - $I-\A_0=\langle\A,\A^?\setminus\A_0,\R\setminus\R',\R^?\setminus\R'\rangle$, where $\R'=\{(a,b)\in\R\cup\R^?\mid a\in\A_0$ or $b\in\A_0\}$, i.e., the set of attacks related to the arguments in $\A_0$.

    Now we introduce the notion of stability of verification of a set of arguments, which is defined based on verification status. Given an IAF $I$, a set of arguments $S$ and a verification status $j\in \{\ad,\sta,\co,\gr,\pr\}\times\{\true,\false\}$, $S$ is stable-$j$ w.r.t. $I$ iff the verification status of $S$ remains $j$ in any completion of $I$. 
\begin{definition}[Stability of verification in IAFs]
    Given an IAF $I=\langle\A,\A^?,\R,$ $\R^?\rangle$, a set of arguments $S\subseteq\A\cup\A^?$, and a verification status $j\in \{\ad,\sta,\co,\gr,\pr\}$ $\times\{\true,\false\}$, $S$ is stable-$j$ w.r.t. $I$ iff for any completion $F$ of $I$, $S$ is $j$ in $F$. 
\end{definition}
    The precise complexity results of verification stability problems can be obtained directly from the results of $PosVer$ and $NecVer$ problems given in \cite{baumeister2018verification,fazzinga2020revisiting}. For an IAF $I$, a set of arguments $S$ and semantics $\sigma$, $PosVer_\sigma$ asks whether there is a completion $F$ of $I$ such that $S\in\sigma(F)$ whereas $NecVer_\sigma$ asks whether for any completion $F$ of $I$, $S\in\sigma(F)$ always holds. Therefore, $S$ is stable-$\sigma$-$\true$ w.r.t. $I$ iff $NecVer_\sigma(I,S)=true$ and $S$ is stable-$\sigma$-$\false$ w.r.t. $I$ iff $PosVer_\sigma(I,S)=false$ holds. For $\sigma\in\{\ad,\sta,\co,\gr\}$, both $PosVer_\sigma$ and $NecVer_\sigma$ problems are in $P$ complexity class, while $PosVer_\pr$ is $\Sigma_2$-$c$ and $NecVer_\pr$ is $coNP$-$c$ \cite{baumeister2018verification,fazzinga2020revisiting}. These known results will be used subsequently in this paper. 
    
    Given semantics $\sigma$, we say that a set of arguments $S$ is stable-$\sigma$ w.r.t. an IAF $I$ if $S$ is stable-$\sigma$-$\true$ or stable-$\sigma$-$\false$ w.r.t. $I$, which means that $S$ holds the same verification result under $\sigma$ semantics in any completion of the IAF $I$.
\section{Relevance for Stability of Verification}\label{def1}
    In this section, we define relevance for stability of verification status of a given set of arguments and then study the complexity of related computational issues. 
\subsection{Definitions for relevance}
    First, we extend the notion of relevance to stability of verification. Given an IAF $I$, a set of arguments $S$, addition (resp., removal) of an uncertain element $e$ is relevant for $S$ to reach $j$ verification status if there is a partial completion of $I$ where $e$ is the unique uncertain element not decided yet, and adding (resp., removing) $e$ will lead $S$ to become $j$ whereas the opposite action will not.
\begin{definition}[Relevance for stability of verification]\label{rel}
    Given an IAF $I=\langle\A,\A^?,\R,\R^?\rangle$, a set of arguments $S\subseteq\A\cup\A^?$, a verification status $j$ and an uncertain element $e\in\A^?\cup\R^?$, addition (resp., removal) of $e$ is $j$-relevant for $S$ w.r.t. $I$ iff there exists an IAF $I'=\langle\A',\A^{?'},\R',\R^{?'}$ $\rangle\in part(I)$ satisfying that $\A^{?'}\cup\R^{?'}=\{e\}$, and $S$ is $j$ in $cert(I'+\{e\})$ (resp., $cert(I'-\{e\})$) while $S$ is not $j$ in $cert(I'-\{e\})$ (resp., $cert(I'+\{e\})$). We use $RE^+(I,S,j)$ and $RE^-(I,S,j)$ to respectively denote the uncertain elements whose addition and removal are $j$-relevant for $S$ w.r.t. $I$.
\end{definition}
    In other words, addition or removal of $e$ is $j$-relevant for $S$ if such an action is needed for $S$ to reach $j$ status in some situation. Since the verification status of $S$ is either -$\true$ or -$\false$ in completions, we can see that the addition and removal are dual actions according to Definition \ref{rel}, i.e., $RE^+(I,S,\sigma$-$\true)=RE^-(I,S,\sigma$-$\false)$ and $RE^-(I,S,\sigma$-$\true)=RE^+(I,S$ $,\sigma$-$\false)$ always hold. This says that for any uncertain element, the -$\true$ relevance of its addition or removal coincides with the -$\false$ relevance of the opposite action. 

    If neither addition nor removal of $e$ is $\sigma$-$\true$($\false$)-relevant for $S$ w.r.t. $I$, we say that $e$ is $\sigma$-{\it irrelevant}. This means that there is no need to investigate $e$ to reach stability, as in all situations, the existence of $e$ has no influence on the verification result of $S$. 

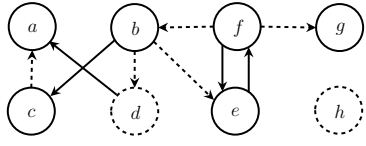
\begin{figure}[t]
\centering 
\tikzset{every picture/.style={line width=0.75pt}} 
\begin{tikzpicture}[scale=0.0125]
\draw   (129,199) .. controls (129,185.19) and (140.19,174) .. (154,174) .. controls (167.81,174) and (179,185.19) .. (179,199) .. controls (179,212.81) and (167.81,224) .. (154,224) .. controls (140.19,224) and (129,212.81) .. (129,199) -- cycle ;
\draw   (239,288) .. controls (239,274.19) and (250.19,263) .. (264,263) .. controls (277.81,263) and (289,274.19) .. (289,288) .. controls (289,301.81) and (277.81,313) .. (264,313) .. controls (250.19,313) and (239,301.81) .. (239,288) -- cycle ;
\draw   (348,289) .. controls (348,275.19) and (359.19,264) .. (373,264) .. controls (386.81,264) and (398,275.19) .. (398,289) .. controls (398,302.81) and (386.81,314) .. (373,314) .. controls (359.19,314) and (348,302.81) .. (348,289) -- cycle ;
\draw   (457,288) .. controls (457,274.19) and (468.19,263) .. (482,263) .. controls (495.81,263) and (507,274.19) .. (507,288) .. controls (507,301.81) and (495.81,313) .. (482,313) .. controls (468.19,313) and (457,301.81) .. (457,288) -- cycle ;
\draw   (346,199) .. controls (346,185.19) and (357.19,174) .. (371,174) .. controls (384.81,174) and (396,185.19) .. (396,199) .. controls (396,212.81) and (384.81,224) .. (371,224) .. controls (357.19,224) and (346,212.81) .. (346,199) -- cycle ;
\draw   (130,289) .. controls (130,275.19) and (141.19,264) .. (155,264) .. controls (168.81,264) and (180,275.19) .. (180,289) .. controls (180,302.81) and (168.81,314) .. (155,314) .. controls (141.19,314) and (130,302.81) .. (130,289) -- cycle ;
\draw  [dash pattern={on 1.5pt off 1.5pt}] (239,198.56) .. controls (239.24,184.76) and (250.63,173.76) .. (264.44,174) .. controls (278.24,174.24) and (289.24,185.63) .. (289,199.44) .. controls (288.76,213.24) and (277.37,224.24) .. (263.56,224) .. controls (249.76,223.76) and (238.76,212.37) .. (239,198.56) -- cycle ;
\draw  [dash pattern={on 1.5pt off 1.5pt}] (456,200) .. controls (456,186.19) and (467.19,175) .. (481,175) .. controls (494.81,175) and (506,186.19) .. (506,200) .. controls (506,213.81) and (494.81,225) .. (481,225) .. controls (467.19,225) and (456,213.81) .. (456,200) -- cycle ;
\draw    (244.83,216.5) -- (175.19,271.64) ;
\draw [shift={(172.83,273.5)}, rotate = 321.63] [fill={rgb, 255:red, 0; green, 0; blue, 0 }  ][line width=0.08]  [draw opacity=0] (10.72,-5.15) -- (0,0) -- (10.72,5.15) -- (7.12,0) -- cycle    ;
\draw    (243.33,275) -- (176.6,216.97) ;
\draw [shift={(174.33,215)}, rotate = 41.01] [fill={rgb, 255:red, 0; green, 0; blue, 0 }  ][line width=0.08]  [draw opacity=0] (10.72,-5.15) -- (0,0) -- (10.72,5.15) -- (7.12,0) -- cycle    ;
\draw    (385.33,219) -- (385.33,264) ;
\draw [shift={(385.33,267)}, rotate = 270] [fill={rgb, 255:red, 0; green, 0; blue, 0 }  ][line width=0.08]  [draw opacity=0] (10.72,-5.15) -- (0,0) -- (10.72,5.15) -- (7.12,0) -- cycle    ;
\draw    (357.33,269) -- (357.33,223) ;
\draw [shift={(357.33,220)}, rotate = 90] [fill={rgb, 255:red, 0; green, 0; blue, 0 }  ][line width=0.08]  [draw opacity=0] (10.72,-5.15) -- (0,0) -- (10.72,5.15) -- (7.12,0) -- cycle    ;
\draw  [dash pattern={on 1.5pt off 1.5pt}]  (284.33,273) -- (346.18,213.09) ;
\draw [shift={(348.33,211)}, rotate = 135.91] [fill={rgb, 255:red, 0; green, 0; blue, 0 }  ][line width=0.08]  [draw opacity=0] (10.72,-5.15) -- (0,0) -- (10.72,5.15) -- (7.12,0) -- cycle    ;
\draw  [dash pattern={on 1.5pt off 1.5pt}]  (264,263) -- (264,227) ;
\draw [shift={(264,224)}, rotate = 90] [fill={rgb, 255:red, 0; green, 0; blue, 0 }  ][line width=0.08]  [draw opacity=0] (10.72,-5.15) -- (0,0) -- (10.72,5.15) -- (7.12,0) -- cycle    ;
\draw  [dash pattern={on 1.5pt off 1.5pt}]  (398,289) -- (454,288.05) ;
\draw [shift={(457,288)}, rotate = 179.03] [fill={rgb, 255:red, 0; green, 0; blue, 0 }  ][line width=0.08]  [draw opacity=0] (10.72,-5.15) -- (0,0) -- (10.72,5.15) -- (7.12,0) -- cycle    ;
\draw  [dash pattern={on 1.5pt off 1.5pt}]  (348,289) -- (292,288.05) ;
\draw [shift={(289,288)}, rotate = 0.97] [fill={rgb, 255:red, 0; green, 0; blue, 0 }  ][line width=0.08]  [draw opacity=0] (10.72,-5.15) -- (0,0) -- (10.72,5.15) -- (7.12,0) -- cycle    ;
\draw  [dash pattern={on 1.5pt off 1.5pt}]  (154,224) -- (154.93,261) ;
\draw [shift={(155,264)}, rotate = 268.57] [fill={rgb, 255:red, 0; green, 0; blue, 0 }  ][line width=0.08]  [draw opacity=0] (10.72,-5.15) -- (0,0) -- (10.72,5.15) -- (7.12,0) -- cycle    ;

\draw (154.22,196.69) node  [font=\Large,xscale=0.5,yscale=0.5]  {$c$};
\draw (264.66,286.69) node  [font=\Large,xscale=0.5,yscale=0.5]  {$b$};
\draw (374.73,287.69) node  [font=\Large,xscale=0.5,yscale=0.5]  {$f$};
\draw (483.11,286.69) node  [font=\Large,xscale=0.5,yscale=0.5]  {$g$};
\draw (371.22,197.69) node  [font=\Large,xscale=0.5,yscale=0.5]  {$e$};
\draw (155.43,287.69) node  [font=\Large,xscale=0.5,yscale=0.5]  {$a$};
\draw (265.11,197.69) node  [font=\Large,xscale=0.5,yscale=0.5]  {$d$};
\draw (482.11,198.69) node  [font=\Large,xscale=0.5,yscale=0.5]  {$h$};
\end{tikzpicture}

\caption{An example IAF $I_{ex}$ (uncertain arguments and attacks are depicted using dashed circles and lines, respectively)}
\label{fig1}
\vspace{-5mm}
\end {figure}
\begin{example}
    Consider the IAF $I_{ex}$ in Figure \ref{fig1}. For the set of arguments $S=\{a,b\}$, $RE^+(I,S,\ad$-$\true)=\{(b,d)\}$ and $RE^-(I,S,\ad$-$\true)=\{d,(f,b)\}$. The uncertain attack $(c,a)$ is $\ad$-irrelevant, since $c$ is necessarily attacked by $b\in S$ and $S$ will be defended against the attacks from $c$ in any completion. In contrary to $(c,a)$, removal of $d$ is $\ad$-$\true$-relevant for $S$ w.r.t. $I_{ex}$, since when $(b,d)$ is removed, there are no arguments in $S$ attacking $d$ and $d$ is needed to be removed for $S$ to be admissible. One can see that there is a partial completion $I_{ex}'=I_{ex}-\{(c,a),(b,d),(b,e),(f,b),(f,g)\}-\{h\}$ of $I_{ex}$ such that $S$ is not stable-$\ad$ w.r.t. $I_{ex}'$ whereas $S$ is stable-$\ad$-$\true$ w.r.t. $I_{ex}'-\{d\}$.
\end{example}
    Lastly in this subsection, we give a proposition that relates the identification of stability to detecting the existence of relevant elements. 
\begin{proposition}\label{rel_sta_relation}
    Given an IAF $I=\langle\A,\A^?,\R,\R^?\rangle$, a set of arguments $S\subseteq\A\cup\A^?$ and semantics $\sigma$, $S$ is stable-$\sigma$ w.r.t. $I$ iff $\forall e\in\A^?\cup\R^?$, $e$ is $\sigma$-irrelevant.
\end{proposition}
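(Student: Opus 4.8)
The plan is to prove the biconditional by establishing both directions, and the natural strategy is to prove the contrapositive of each direction so that the logic lines up cleanly. The statement says $S$ is stable-$\sigma$ iff every uncertain element is $\sigma$-irrelevant, so I would instead show: $S$ is \emph{not} stable-$\sigma$ iff there exists some $e\in\A^?\cup\R^?$ that is \emph{not} $\sigma$-irrelevant, i.e., whose addition or removal is $\sigma$-$\true$-relevant or $\sigma$-$\false$-relevant for $S$.

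First I would handle the ``only if'' direction in contrapositive form: assuming every $e$ is $\sigma$-irrelevant, show $S$ is stable-$\sigma$. The key idea here is to fix an arbitrary completion $F=cert(I')$ reached by resolving all uncertain elements one at a time, and argue by induction along the resolution sequence that the verification status of $S$ never changes. Concretely, starting from any fully-resolving sequence of the uncertain elements, at each step we arrive at a partial completion $I''$ in which a single uncertain element $e$ remains undecided; irrelevance of $e$ (for both $\sigma$-$\true$ and $\sigma$-$\false$) forces $S$ to have the same verification status in $cert(I''+\{e\})$ and $cert(I''-\{e\})$. Chaining these equalities back to $cert(I)$ (with all elements removed) shows $S$ has a fixed status across all completions, hence $S$ is stable-$\sigma$. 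I would need to be slightly careful that the ``single undecided element remaining'' partial completions invoked in Definition~\ref{rel} are exactly the ones arising along such a resolution path, and that irrelevance is being used in its full two-sided sense.

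For the ``if'' direction, again in contrapositive form: assume $S$ is not stable-$\sigma$ and exhibit a relevant element. Non-stability means there exist two completions $F_1=cert(I_1)$ and $F_2=cert(I_2)$ of $I$ with $S\in\sigma(F_1)$ but $S\notin\sigma(F_2)$. The plan is to connect $I_1$ and $I_2$ by a path of single-element resolutions: since both are full resolutions of the same uncertain elements $\A^?\cup\R^?$, I can interpolate a sequence of completions that changes one uncertain element's decision at a time, starting at $F_1$ and ending at $F_2$. Because the verification status differs between the endpoints, somewhere along this sequence it must flip on a single-element change; that flipping element $e$, together with the corresponding partial completion in which $e$ is the only undecided element, witnesses $\sigma$-$\true$-relevance (and dually $\sigma$-$\false$-relevance) of either the addition or removal of $e$, so $e$ is not $\sigma$-irrelevant.

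The main obstacle I expect is the bookkeeping in this interpolation argument: I must verify that each intermediate configuration along the path from $F_1$ to $F_2$ is a legitimate completion of $I$ (respecting the partial-completion constraints, in particular that attacks incident to arguments are only present when both endpoints exist), and that the ``one element at a time'' path can always be realized as a sequence of the single-undecided-element partial completions demanded by Definition~\ref{rel}. The duality remark already noted in the paper, namely $RE^+(I,S,\sigma\text{-}\true)=RE^-(I,S,\sigma\text{-}\false)$, should let me avoid treating additions and removals separately and thereby keep the case analysis short.
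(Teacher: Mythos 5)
There is a genuine logical gap: your two arguments establish the \emph{same} implication, so one direction of the biconditional is never proven. Argument A assumes every uncertain element is $\sigma$-irrelevant and derives that $S$ is stable-$\sigma$; argument B assumes $S$ is not stable-$\sigma$ and exhibits a relevant element. These are contrapositives of one another, so both prove only ``all elements irrelevant $\Rightarrow$ $S$ is stable-$\sigma$''. The converse, ``$S$ is stable-$\sigma$ $\Rightarrow$ every element is irrelevant'' (equivalently, ``some element is relevant $\Rightarrow$ $S$ is not stable-$\sigma$''), is never addressed; the slip is in the labelling at the start, since the contrapositive of the ``only if'' direction is ``some $e$ is relevant $\Rightarrow$ $S$ is not stable'', not the statement you wrote. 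Fortunately the missing direction is the easy one and is exactly what the paper's ($\Rightarrow$) does in two lines: if addition or removal of $e$ is $\sigma$-$\true$- or $\sigma$-$\false$-relevant, Definition~\ref{rel} directly supplies a partial completion $I''$ of $I$ with $e$ as its unique uncertain element such that $S$ has different verification statuses in $cert(I''+\{e\})$ and $cert(I''-\{e\})$; both of these are completions of $I$, so $S$ is not stable-$\sigma$. You need to add this.

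The direction you do prove is sound, and your interpolation argument is in fact more explicit than the paper's. The paper's ($\Leftarrow$) proof simply asserts the existence of a partial completion $I'$ and an element $e$ with $I'$ unstable while deciding $e$ yields stability, and then builds the witnessing single-undecided-element IAF from a falsifying completion; your single-element-flip path from $F_1$ to $F_2$ (with the ordering constraint you flag, resolving uncertain attacks before removing their incident uncertain arguments so that every intermediate configuration is a legitimate completion) is a clean way to justify that assertion. One wording issue in argument A: along a resolution sequence starting from $I$, the intermediate partial completions have many undecided elements, not one; the single-undecided-element IAFs required by Definition~\ref{rel} are the ones you construct to compare two adjacent completions on the path, which is what your chaining actually uses.
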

\begin{proof}
    ($\Rightarrow$) Assume that there is an element $e\in\A^?\cup\R^?$ such that $e$ is not $\sigma$-irrelevant, then there is an IAF $I'\in part(I)$ such that $S$ is not stable-$\sigma$ w.r.t. $I'$ according to the definition of relevance, hence $S$ is not stable-$\sigma$ w.r.t. $I'$ neither, which yields contradiction.
    
    ($\Leftarrow$) Assume that $S$ is not stable-$\sigma$, then there is an IAF $I'\in part(I)$ and an uncertain element $e$ of $I'$ such that $S$ is not stable-$\sigma$ w.r.t. $I'$ whereas deciding $e$ make $I'$ become stable. W.L.O.G., assume that $S$ is stable-$\sigma$-$\true$ w.r.t. $I'+\{e\}$, let $C$ be a completion of $I'-\{e\}$ that $S$ is $\sigma$-$\false$ in $C$, and let $I''$ be the IAF s.t. $I''\in part(I')$, $e$ is the unique uncertain element of $I''$, and $C=cert(I''-\{e\})$. We can see that $I''+\{e\}\in part(I'+\{e\})$, hence $S$ is $\sigma$-$\true$ in $cert(I''+\{e\})$. Therefore, addition of $e$ is $\sigma$-$\true$-relevant for $S$ w.r.t. $I$, a contradiction occurs.
\end{proof}
\subsection{Complexity analysis of relevance}
    In this subsection we study the computational complexity of decision problems for identifying the relevance of adding or removing an uncertain element for stability of verification status of a given set of arguments, which is formulated as follows.
\begin{flushleft}
\begin{tabular}{p{1\textwidth}}
\toprule
$j$-\relevance\ of action $\textbf{a}$ \\
\hline
\textbf{Given:}\ \ An IAF $I=\langle\A,\A^?,\R,\R^?\rangle$, a set of arguments $S\subseteq\A\cup\A^?$, a verification status $j$, an action $\textbf{a}\in\{$addition,removal$\}$ and an uncertain element $\textbf{e}\in\A^?\cup\R^?$. \\
\textbf{Question:}\ \ Is \textbf{a} of $\textbf{e}$ $j$-relevant for $S$ w.r.t. $I$? \\
\bottomrule
\end{tabular}
\end{flushleft}

    Next we give results for obtaining upper bounds of \relevance\ problem. Directly according to Definition \ref{rel}, we can conclude that for semantics $\sigma$ whose verification problem is in $C$ complexity, the relevance problem under $\sigma$ is in $NP^C$. Thus we directly obtain an upper bound for various semantics as follows.
\begin{corollary}
    The following results hold:\\
    1. for $j\in\{\ad,\sta,\co,\gr\}\times\{\true,\false\}$, $j$-\relevance\ is in $NP$; and\\
    2. $\pr$-$\true$($\false$)-\relevance\ is in $\Sigma_2^p$.
\end{corollary}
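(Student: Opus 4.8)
The plan is to justify the general $NP^C$ bound stated just above the corollary and then instantiate it with the known complexities of verifying a $\sigma$-extension in a concrete AF. The whole argument is a guess-and-verify routine read off directly from Definition \ref{rel}.

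First I would unfold the relevance condition into a nondeterministic procedure. To decide whether action $\textbf{a}$ of $\textbf{e}$ is $j$-relevant for $S$ w.r.t. $I$, a witness is a partial completion $I'=\langle\A',\A^{?'},\R',\R^{?'}\rangle\in part(I)$ with $\A^{?'}\cup\R^{?'}=\{e\}$; that is, every uncertain element of $I$ other than $e$ has been decided present or absent while $e$ alone remains undecided. Such a witness is specified by one yes/no choice per element of $(\A^?\cup\R^?)\setminus\{e\}$, hence has size polynomial in $|I|$, and whether a guessed quadruple is a legitimate member of $part(I)$ can be checked in polynomial time against the four containment conditions in the definition of partial completion. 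So I would guess $I'$ nondeterministically and verify its well-formedness deterministically.

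Second, given such an $I'$, the two objects $cert(I'+\{e\})$ and $cert(I'-\{e\})$ are ordinary complete AFs computable in polynomial time, and the remaining task is to check that $S$ is $j$ in one of them and not $j$ in the other, as dictated by whether $\textbf{a}$ is addition or removal. Each such check is an instance (or a complement) of the concrete-AF verification problem ``is $S\in\sigma(F)$?''. If this problem lies in complexity class $C$, then both a positive and a negative check can be answered by a single $C$-oracle query, negating the answer when needed (since $P^{C}=P^{\text{co-}C}$), so the overall computation is nondeterministic polynomial time relative to a $C$ oracle, which gives the $NP^{C}$ bound asserted in the text.

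Finally I would plug in the standard AF verification complexities. For $\sigma\in\{\ad,\sta,\co,\gr\}$, deciding $S\in\sigma(F)$ in a concrete AF is in $P$, so the oracle adds nothing and the relevance problem is in $NP^{P}=NP$, yielding Part 1. For $\sigma=\pr$, deciding $S\in\pr(F)$ is $coNP$-complete---admissibility is polynomial, but $\subseteq$-maximality requires ruling out every admissible strict superset---so $C=coNP$ and the procedure runs in $NP^{coNP}=NP^{NP}=\Sigma_2^p$, yielding Part 2. The only delicate point is this last step: the relevance condition mixes a positive test ($S$ is $j$) with a negative one ($S$ is not $j$), so for $\pr$ we simultaneously need a $coNP$-type test $S\in\pr(F)$ and its $NP$ complement; I would note that an $NP$ oracle suffices for both, since $coNP\subseteq P^{NP}$, so the algorithm stays inside $\Sigma_2^p$ regardless of whether the queried status is $\pr$-$\true$ or $\pr$-$\false$. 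Nothing in the argument depends on $I$ being an AtIAF or ArIAF, so uncertain arguments and uncertain attacks are handled uniformly.
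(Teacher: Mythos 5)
Your proposal is correct and is exactly the guess-and-verify argument the paper intends: the corollary is stated as a direct consequence of the remark that a semantics with verification complexity $C$ gives an $NP^C$ bound, instantiated with $C=P$ for $\ad,\sta,\co,\gr$ and $C=coNP$ for $\pr$. Your elaboration of the polynomial-size witness, the well-formedness check of the partial completion, and the handling of the mixed positive/negative verification queries via an $NP$ oracle fills in precisely the details the paper leaves implicit.
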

    Although the above corollary tells an exponential upper bound for all semantics, in the next we will give characterizing conditions for $\ad$,$\sta$ and $\co$ semantics, all of which can be checked in polynomial time. 
    
    To obtain the complexity of \relevance\ problem, we consider one of its specific problems which constrains the input IAF to have none uncertain arguments, i.e., the input being an AtIAF. The following proposition shows that the general \relevance\ problem can be reduced to this specific form, which means that both of them hold the same complexity. The way in this proposition for constructing a related AtIAF from an IAF is borrowed from \cite{mantadelis2020probabilistic}.  
\begin{proposition}\label{att-reduce}
    Given an IAF $I=\langle\A,\A^?,\R,\R^?\rangle$, a set of arguments $S\subseteq\A\cup\A^?$ and a verification status $j$, let $w$ be an argument such that $w\notin\A\cup\A^?$ and $I_{att}=\langle\A\cup\A^?\cup\{w\},\emptyset,\R,\R^?\cup\{(w,a)\mid a\in\A^?\}\rangle$. The following results hold: \\ 
    - for each uncertain attack $e\in\R^?$, $e\in RE^+(I,S,j)$ (resp., $e\in RE^-(I,S,j)$) iff $e\in RE^+(I_{att},S\cup\{w\},j)$ (resp., $e\in RE^-(I_{att},S\cup\{w\},j)$); and\\
    - for each uncertain argument $e\in\A^?$, $e\in RE^+(I,S,j)$ (resp., $e\in RE^-(I,S,j)$) iff $(w,e)\in RE^-(I_{att},S\cup\{w\},j)$ (resp., $(w,e)\in RE^+(I_{att},S\cup\{w\},j)$).
\end{proposition}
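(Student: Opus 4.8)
The plan is to reduce the statement to a single status-preserving correspondence at the level of (full) completions, and then lift that correspondence to the partial completions that witness relevance. The guiding idea behind $I_{att}$ is that the fresh argument $w$ is never attacked in any completion of $I_{att}$ (no attack in $\R\cup\R^?\cup\{(w,a)\}$ targets $w$), so $w$ lies in every admissible, complete, grounded and preferred extension that is a candidate here, and any $a\in\A^?$ for which the uncertain attack $(w,a)$ is present is attacked by $w$ and hence \emph{defeated}: it belongs to no conflict-free superset of $\{w\}$ and to no extension. Thus ``$a$ is absent in a completion of $I$'' is simulated by ``$(w,a)$ is present in the matching completion of $I_{att}$'', and ``$a$ is present'' by ``$(w,a)$ is absent''. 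I would fix, for $A_0\subseteq\A^?$ and a choice of uncertain attacks $R_0\subseteq\R^?$, the completion $F$ of $I$ with arguments $\A\cup A_0$ and the matching completion $F'$ of $I_{att}$ in which $w$ attacks exactly the killed arguments $\A^?\setminus A_0$; note that $F'$ carries all of $\R$, hence some extra certain attacks incident to killed arguments that are absent from $F$, and that the decisions on uncertain attacks touching killed arguments are immaterial.

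The heart of the proof is the claim that for every $T\subseteq\A\cup A_0$ and every $\sigma$, $T\in\sigma(F)$ iff $T\cup\{w\}\in\sigma(F')$ (equivalently, ``$T$ is $j$ in $F$'' iff ``$T\cup\{w\}$ is $j$ in $F'$'' for both values of $j$). If $S\not\subseteq\A\cup A_0$ then $S$ contains a killed argument, so $S\notin\sigma(F)$ (not a subset of the arguments) while $S\cup\{w\}$ is not even conflict-free in $F'$ (as $w$ attacks that argument), so both sides are $\false$; I would dispose of this case first. When $T\subseteq\A\cup A_0$, the two structural facts I would use repeatedly are: (i) the attacks among the present arguments $\A\cup A_0$ are identical in $F$ and $F'$; and (ii) every killed argument is attacked by $w\in T\cup\{w\}$, while $w$ attacks no present argument and has no attacker. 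From (i)--(ii), conflict-freeness of $T$ transfers verbatim, and for defence the only new attackers that $T\cup\{w\}$ faces in $F'$ come from killed arguments, each neutralised by $w$; this settles the $\ad$ case and, by the same bookkeeping on $(T\cup\{w\})^+_{F'}$, the $\sta$ case. For $\co$ I would additionally check that $\Gamma_{F'}(T\cup\{w\})$ contains no killed argument (a killed argument is attacked by the unattacked $w$, hence undefendable) and no present argument outside $T$ (such an argument would already be defended by $T$ in $F$), matching $\Gamma_F(T)\subseteq T$. The $\gr$ and $\pr$ cases then follow because $T\mapsto T\cup\{w\}$ restricts to an $\subseteq$-isomorphism from $\co(F)$ onto $\co(F')$ and from $\ad(F)$ onto the admissible sets of $F'$ containing $w$: every complete extension of $F'$ contains the unattacked $w$ (it lies in $\Gamma_{F'}$ of every set, and completeness forces its inclusion) and every preferred extension does too (any admissible set can absorb $w$), while all of them exclude the killed arguments; hence $\subseteq$-minimality (grounded) and $\subseteq$-maximality (preferred) are preserved.

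With the core lemma in hand, I would translate the witnessing partial completions. For an uncertain attack $e\in\R^?$, a partial completion $I'\in part(I)$ with $\A^{?'}\cup\R^{?'}=\{e\}$ decides all uncertain arguments (fixing some $A_0$) and all uncertain attacks but $e$; I map it to the partial completion $I'_{att}$ of $I_{att}$ that keeps $e$ as the unique undecided attack and fixes every $(w,a)$ according to whether $a\in A_0$. Then $cert(I'+\{e\})$ and $cert(I'-\{e\})$ are matched by the core lemma to $cert(I'_{att}+\{e\})$ and $cert(I'_{att}-\{e\})$, so $S$ is $j$ in $cert(I'\pm\{e\})$ exactly when $S\cup\{w\}$ is $j$ in $cert(I'_{att}\pm\{e\})$, giving $e\in RE^+(I,S,j)\Leftrightarrow e\in RE^+(I_{att},S\cup\{w\},j)$ and likewise for $RE^-$; conversely, any witnessing partial completion of $I_{att}$ with unique undecided attack $e$ arises this way. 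For an uncertain argument $e\in\A^?$, the unique undecided element is $e$, and the matching partial completion of $I_{att}$ leaves $(w,e)$ as its unique undecided attack; here adding $e$ (so $e$ present) corresponds to \emph{removing} $(w,e)$ and removing $e$ corresponds to \emph{adding} $(w,e)$, so $cert(I'+\{e\})\leftrightarrow cert(I'_{att}-\{(w,e)\})$ and $cert(I'-\{e\})\leftrightarrow cert(I'_{att}+\{(w,e)\})$, which flips the sign of the action and yields $e\in RE^+(I,S,j)\Leftrightarrow (w,e)\in RE^-(I_{att},S\cup\{w\},j)$ and $e\in RE^-(I,S,j)\Leftrightarrow (w,e)\in RE^+(I_{att},S\cup\{w\},j)$.

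I expect the main obstacle to be the careful per-semantics verification of the core lemma, specifically showing that the material present in $F'$ but absent from $F$ --- the certain attacks of $\R$ incident to killed arguments, together with any uncertain attacks touching them --- never changes the status of the tested set. The single observation that a killed argument is attacked by $w$, which is itself unattacked and lies in the tested set, is what makes all such extra attacks inert (they are either outgoing from a defeated argument, hence countered by $w$, or incoming to a defeated argument, hence irrelevant); turning this into airtight ``only if'' directions for $\co$, $\gr$ and $\pr$, where one must rule out any spurious extension of $F'$ lying outside the image of $T\mapsto T\cup\{w\}$, is the delicate part.
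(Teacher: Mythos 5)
Your proposal is correct and follows essentially the same route as the paper's proof: both rest on the same correspondence between a completion $F$ of $I$ and the completion of $I_{att}$ in which $w$ attacks exactly the arguments of $\A^?$ absent from $F$, together with the observation that the $\sigma$-extensions of the latter are precisely the sets $T\cup\{w\}$ for $\sigma$-extensions $T$ of $F$. The only difference is one of presentation: you isolate this as an explicit status-preservation lemma verified per semantics, where the paper constructs the matching completions inside each direction and appeals to ``the property of $\sigma$ semantics.''
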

\begin{proof}
    We prove the case $e\in RE^+(I,S,j)$ where $j=\sigma\times\true$ and $\sigma\in\{\ad,\sta,\co,\gr,\pr\}$ for both conclusions in the proposition and the proof of case $e\in RE^-(I,S,j)$ or $j=\sigma\times\false$ is analogous.
    
    1. ($\Rightarrow$) Since $e\in RE^+(I,S,\sigma$-$\true)$, there is a completion $F=\langle\A_F,\R_F\rangle$ of $I$ such that $S\in\sigma(F),e\in\R_F$ and removal of $e$ will lead $S$ fail to be a $\sigma$-extension. Let $F'=\langle\A_F\cup\A^?,\R_F\cup\R\cup\{(w,v)\mid v\in\A^?\setminus\A_F\}\rangle$. Then it is easy to see that $S\cup\{w\}\in\sigma(F')$ by the property of $\sigma$ semantics and $F'$ is also a completion of $I_{att}$. Meanwhile, removal of $e$ will lead $S$ fail to be a $\sigma$-extension in $F'_{\downarrow\A_F}$\footnote{Given an AF $F=\langle\A,\R\rangle$ and a set of arguments $S\subseteq\A$, we use $F_{\downarrow S}$ to denote $\langle S,\R\cap(S\times S)\rangle$.} then cause that $S\cup\{w\}$ fails to be a $\sigma$-extension in $F'$, hence $e\in RE^+(I_{att},S\cup\{w\},\sigma$-$\true)$ holds.

    ($\Leftarrow$) Since $e\in RE^+(I_{att},S\cup\{w\},\sigma$-$\true)$, there is a completion $F=\langle\A_F,\R_F\rangle$ of $I_{att}$ such that $S\in\sigma(F),e\in\R_F$ and removal of $e$ will lead $S$ fail to be a $\sigma$-extension. And since $e\in\R^?$, $e$ is not related to $w$. Let $F'=F_{\downarrow\A\cup\A^?\setminus(\{w\}\cup\{w\}^+_F)}$. Then we can see that $S\in\sigma(F')$ and removal of $e$ will lead $S$ fail to be a $\sigma$-extension in $F'$. Meanwhile, $F'$ is also a completion of $I$, hence $e\in RE^+(I,S,\sigma$-$\true)$ holds.

    2. ($\Rightarrow$) Since $e\in RE^+(I,S,\sigma$-$\true)$, there is a completion $F=\langle\A_F,\R_F\rangle$ of $I$ such that $S\in\sigma(F),e\in\A_F$ and removal of $e$ will lead $S$ fail to be a $\sigma$-extension. Let $F'=\langle\A_F\cup\A^?,\R_F\cup\R\cup\{(w,v)\mid v\in\A^?\setminus\A_F\}\rangle$. Then it is easy to see that $S\cup\{w\}\in\sigma(F')$ by the property of $\sigma$ semantics and $F'$ is also a completion of $I_{att}$. Meanwhile, addition of attack $(w,e)$ will lead $S\cup\{w\}$ fail to be a $\sigma$-extension in $F'$, hence $e\in RE^+(I_{att},S\cup\{w\},\sigma$-$\true)$ holds.

    ($\Leftarrow$) Since $(w,e)\in RE^-(I_{att},S\cup\{w\},\sigma$-$\true)$, there is a completion $F=\langle\A_F,\R_F\rangle$ of $I_{att}$ that $S\in\sigma(F),(w,e)\notin\R_F$ and addition of $e$ will lead $S$ fail to be a $\sigma$-extension. Let $F'=F_{\downarrow\A\cup\A^?\setminus(\{w\}\cup\{w\}^+_F)}$. Then we can see that $S\in\sigma(F')$ and removal of $e$ will lead $S$ fail to be a $\sigma$-extension in $F'$ due to the property of $\sigma$ semantics. Meanwhile, $F'$ is also a completion of $I$, hence $e\in RE^+(I,S,\sigma$-$\true)$ holds.
\end{proof} 
    Essentially, the uncertainty of the uncertain arguments can be replaced by the uncertainty of an attack from $w$ to them, whereas the uncertainty of the uncertain attacks remains. This is because once an argument $e$ is attacked by an argument not attacked by others, $e$ is impossible to be accepted and can be regarded as absent.
\begin{corollary}\label{att-reduce-cor}
    Given a verification status $j$, $j$-\relevance\ holds the same complexity with $j$-\relevance\ problem constraining that the input IAF is an AtIAF. 
\end{corollary}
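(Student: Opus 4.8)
The final statement to prove is Corollary~\ref{att-reduce-cor}, which asserts that the $j$-\relevance\ problem and its restriction to AtIAF inputs share the same complexity. The plan is to argue that this corollary follows almost immediately from Proposition~\ref{att-reduce}, which has just been established, by exhibiting a polynomial-time reduction in each direction (the restricted problem is trivially a special case of the general one, so the only substantive direction is reducing the general problem to the AtIAF-restricted one).

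First I would observe that the restriction of $j$-\relevance\ to AtIAF inputs is a subproblem of the general $j$-\relevance\ problem, so any algorithm solving the general problem solves the restricted one without modification; hence the restricted problem is no harder than the general one. The remaining task is the reverse: to show the general problem reduces in polynomial time to the AtIAF-restricted problem. For this I would use exactly the construction of Proposition~\ref{att-reduce}: given an instance $\langle I,S,j,\textbf{a},\textbf{e}\rangle$ of the general problem, build $I_{att}=\langle\A\cup\A^?\cup\{w\},\emptyset,\R,\R^?\cup\{(w,a)\mid a\in\A^?\}\rangle$ and the modified argument set $S\cup\{w\}$. Then translate the queried element and action: if $\textbf{e}\in\R^?$ is an uncertain attack, keep $\textbf{e}$ and the action $\textbf{a}$ unchanged; if $\textbf{e}\in\A^?$ is an uncertain argument, replace $\textbf{e}$ by the attack $(w,\textbf{e})$ and swap the action (addition becomes removal and vice versa). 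The two bullet points of Proposition~\ref{att-reduce} state precisely that this translation preserves membership in the relevant sets $RE^+$ and $RE^-$, so the answer to the constructed AtIAF instance equals the answer to the original instance.

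Next I would verify that the reduction is polynomial-time computable and maps to a valid AtIAF instance. The construction adds a single argument $w$ and at most $|\A^?|$ new uncertain attacks, so $I_{att}$ has size linear in the size of $I$ and is built in polynomial time; moreover $I_{att}$ has empty uncertain-argument set by construction, so it is genuinely an AtIAF, making the output a legitimate instance of the restricted problem. The translation of the query element and action is a constant-time rewrite. This establishes the polynomial-time many-one reduction from the general problem to the restricted one, and combined with the trivial inclusion in the other direction, shows the two problems are interreducible and therefore share the same complexity.

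I do not anticipate a genuine obstacle here, since the heavy lifting has already been done in Proposition~\ref{att-reduce}; the corollary is essentially a packaging of that correspondence into a reduction statement. The only point requiring mild care is the bookkeeping in the uncertain-argument case, where the action is swapped: one must track that $e\in RE^+(I,S,j)$ corresponds to $(w,e)\in RE^-(I_{att},S\cup\{w\},j)$ rather than to addition, and symmetrically for removal, so that the reduction reports the correct yes/no answer. Provided that correspondence is read off faithfully from the proposition, the argument closes.
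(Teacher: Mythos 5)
Your proposal is correct and follows the same route the paper intends: the corollary is read off directly from Proposition~\ref{att-reduce} by combining the trivial direction (the AtIAF-restricted problem is a special case) with the polynomial-time translation $I \mapsto I_{att}$, $S \mapsto S\cup\{w\}$, uncertain attacks kept as-is, and uncertain arguments $e$ replaced by the attack $(w,e)$ with the action swapped. Your explicit attention to the action swap and to verifying that $I_{att}$ is a polynomial-size AtIAF is exactly the bookkeeping the paper leaves implicit.
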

    By Corollary \ref{att-reduce-cor}, in the following we limit our discussion to \relevance\ problem whose input IAF is an AtIAF. Recall that the $\sigma$-$\true$-\relevance\ of addition (resp., removal) essentially coincides with $\sigma$-$\false$-\relevance\ of removal (resp., addition), hence we just focus on $\sigma$-$\true$ \relevance. Also, for the considered set of arguments which is already stable, the answer of \relevance\ is trivial since all of the uncertain elements are irrelevant according to Proposition \ref{rel_sta_relation}. Thus, we only need to study methods for finding relevance elements for a given, unstable set of arguments. Note that despite that the method by Proposition \ref{att-reduce} gives a reduction to a specific AtIAF with an argument not attacked by others, the results we will give are about general AtIAFs.

    Given an AtIAF $I$ and a set of arguments $S$, since all semantics we discuss are based on conflict-freeness, the uncertain attacks between the arguments inside $S$ are necessary to be removed for reaching $\sigma$-$\true$ stability of $S$, thus removing each of these attacks is relevant whereas adding is not. 
\begin{figure}[t]
\centering 
\tikzset{every picture/.style={line width=0.75pt}} 
\begin{tikzpicture}[scale=0.0125]
\draw   (75,215) .. controls (75,201.19) and (86.19,190) .. (100,190) .. controls (113.81,190) and (125,201.19) .. (125,215) .. controls (125,228.81) and (113.81,240) .. (100,240) .. controls (86.19,240) and (75,228.81) .. (75,215) -- cycle ;
\draw   (185,304) .. controls (185,290.19) and (196.19,279) .. (210,279) .. controls (223.81,279) and (235,290.19) .. (235,304) .. controls (235,317.81) and (223.81,329) .. (210,329) .. controls (196.19,329) and (185,317.81) .. (185,304) -- cycle ;
\draw   (294,305) .. controls (294,291.19) and (305.19,280) .. (319,280) .. controls (332.81,280) and (344,291.19) .. (344,305) .. controls (344,318.81) and (332.81,330) .. (319,330) .. controls (305.19,330) and (294,318.81) .. (294,305) -- cycle ;
\draw   (403,304) .. controls (403,290.19) and (414.19,279) .. (428,279) .. controls (441.81,279) and (453,290.19) .. (453,304) .. controls (453,317.81) and (441.81,329) .. (428,329) .. controls (414.19,329) and (403,317.81) .. (403,304) -- cycle ;
\draw   (292,215) .. controls (292,201.19) and (303.19,190) .. (317,190) .. controls (330.81,190) and (342,201.19) .. (342,215) .. controls (342,228.81) and (330.81,240) .. (317,240) .. controls (303.19,240) and (292,228.81) .. (292,215) -- cycle ;
\draw   (76,305) .. controls (76,291.19) and (87.19,280) .. (101,280) .. controls (114.81,280) and (126,291.19) .. (126,305) .. controls (126,318.81) and (114.81,330) .. (101,330) .. controls (87.19,330) and (76,318.81) .. (76,305) -- cycle ;
\draw   (185,214.56) .. controls (185.24,200.76) and (196.63,189.76) .. (210.44,190) .. controls (224.24,190.24) and (235.24,201.63) .. (235,215.44) .. controls (234.76,229.24) and (223.37,240.24) .. (209.56,240) .. controls (195.76,239.76) and (184.76,228.37) .. (185,214.56) -- cycle ;
\draw   (402,217) .. controls (402,203.19) and (413.19,192) .. (427,192) .. controls (440.81,192) and (452,203.19) .. (452,217) .. controls (452,230.81) and (440.81,242) .. (427,242) .. controls (413.19,242) and (402,230.81) .. (402,217) -- cycle ;
\draw    (190.83,232.5) -- (121.19,287.64) ;
\draw [shift={(118.83,289.5)}, rotate = 321.63] [fill={rgb, 255:red, 0; green, 0; blue, 0 }  ][line width=0.08]  [draw opacity=0] (10.72,-5.15) -- (0,0) -- (10.72,5.15) -- (7.12,0) -- cycle    ;
\draw    (189.33,291) -- (122.6,232.97) ;
\draw [shift={(120.33,231)}, rotate = 41.01] [fill={rgb, 255:red, 0; green, 0; blue, 0 }  ][line width=0.08]  [draw opacity=0] (10.72,-5.15) -- (0,0) -- (10.72,5.15) -- (7.12,0) -- cycle    ;
\draw    (331.33,235) -- (331.33,280) ;
\draw [shift={(331.33,283)}, rotate = 270] [fill={rgb, 255:red, 0; green, 0; blue, 0 }  ][line width=0.08]  [draw opacity=0] (10.72,-5.15) -- (0,0) -- (10.72,5.15) -- (7.12,0) -- cycle    ;
\draw    (303.33,285) -- (303.33,239) ;
\draw [shift={(303.33,236)}, rotate = 90] [fill={rgb, 255:red, 0; green, 0; blue, 0 }  ][line width=0.08]  [draw opacity=0] (10.72,-5.15) -- (0,0) -- (10.72,5.15) -- (7.12,0) -- cycle    ;
\draw  [dash pattern={on 1.5pt off 1.5pt}]  (230.33,289) -- (292.18,229.09) ;
\draw [shift={(294.33,227)}, rotate = 135.91] [fill={rgb, 255:red, 0; green, 0; blue, 0 }  ][line width=0.08]  [draw opacity=0] (10.72,-5.15) -- (0,0) -- (10.72,5.15) -- (7.12,0) -- cycle    ;
\draw  [dash pattern={on 1.5pt off 1.5pt}]  (210,279) -- (210,243) ;
\draw [shift={(210,240)}, rotate = 90] [fill={rgb, 255:red, 0; green, 0; blue, 0 }  ][line width=0.08]  [draw opacity=0] (10.72,-5.15) -- (0,0) -- (10.72,5.15) -- (7.12,0) -- cycle    ;
\draw  [dash pattern={on 1.5pt off 1.5pt}]  (344,305) -- (400,304.05) ;
\draw [shift={(403,304)}, rotate = 179.03] [fill={rgb, 255:red, 0; green, 0; blue, 0 }  ][line width=0.08]  [draw opacity=0] (10.72,-5.15) -- (0,0) -- (10.72,5.15) -- (7.12,0) -- cycle    ;
\draw  [dash pattern={on 1.5pt off 1.5pt}]  (294,305) -- (238,304.05) ;
\draw [shift={(235,304)}, rotate = 0.97] [fill={rgb, 255:red, 0; green, 0; blue, 0 }  ][line width=0.08]  [draw opacity=0] (10.72,-5.15) -- (0,0) -- (10.72,5.15) -- (7.12,0) -- cycle    ;
\draw  [dash pattern={on 1.5pt off 1.5pt}]  (100,240) -- (100.93,277) ;
\draw [shift={(101,280)}, rotate = 268.57] [fill={rgb, 255:red, 0; green, 0; blue, 0 }  ][line width=0.08]  [draw opacity=0] (10.72,-5.15) -- (0,0) -- (10.72,5.15) -- (7.12,0) -- cycle    ;
\draw   (501,303) .. controls (501,289.19) and (512.19,278) .. (526,278) .. controls (539.81,278) and (551,289.19) .. (551,303) .. controls (551,316.81) and (539.81,328) .. (526,328) .. controls (512.19,328) and (501,316.81) .. (501,303) -- cycle ;
\draw  [dash pattern={on 1.5pt off 1.5pt}]  (526,278) .. controls (510.65,245.66) and (494.97,211.4) .. (454.51,216.63) ;
\draw [shift={(452,217)}, rotate = 350.61] [fill={rgb, 255:red, 0; green, 0; blue, 0 }  ][line width=0.08]  [draw opacity=0] (10.72,-5.15) -- (0,0) -- (10.72,5.15) -- (7.12,0) -- cycle    ;
\draw  [dash pattern={on 1.5pt off 1.5pt}]  (526,278) .. controls (566,248) and (511.33,137) .. (229.33,199) ;
\draw [shift={(229.33,199)}, rotate = 347.6] [fill={rgb, 255:red, 0; green, 0; blue, 0 }  ][line width=0.08]  [draw opacity=0] (10.72,-5.15) -- (0,0) -- (10.72,5.15) -- (7.12,0) -- cycle    ;

\draw (100.22,212.69) node  [font=\Large,xscale=0.5,yscale=0.5]  {$c$};
\draw (210.66,302.69) node  [font=\Large,xscale=0.5,yscale=0.5]  {$b$};
\draw (320.73,303.69) node  [font=\Large,xscale=0.5,yscale=0.5]  {$f$};
\draw (429.11,302.69) node  [font=\Large,xscale=0.5,yscale=0.5]  {$g$};
\draw (317.22,213.69) node  [font=\Large,xscale=0.5,yscale=0.5]  {$e$};
\draw (101.43,303.69) node  [font=\Large,xscale=0.5,yscale=0.5]  {$a$};
\draw (211.11,213.69) node  [font=\Large,xscale=0.5,yscale=0.5]  {$d$};
\draw (428.11,214.69) node  [font=\Large,xscale=0.5,yscale=0.5]  {$h$};
\draw (526,303) node  [font=\Large,xscale=0.5,yscale=0.5]  {$w$};
\end{tikzpicture}
\caption{The AtIAF $I_{att}$ transformed from $I_{ex}$ by Proposition \ref{att-reduce}}
\label{fig2}
\vspace{-5mm}
\end {figure}

    Except the uncertain attacks inside $S$, in the next we will give tractable methods to completely find all of the relevant attacks under $\ad$,$\sta$ and $\co$, and we will show that both the addition and removal variants of \relevance\ problem under these three semantics are all in P complexity class.

    Firstly we discuss $\ad$ and $\sta$, under which verification of a set of arguments $S$ is not influenced by those attacks whose neither attacker nor target is included in $S$. Hence, any uncertain attacks not related to $S$ are irrelevant for verification of $S$ under these two semantics and the relevant attacks are related to $S$. 

    Under $\ad$ semantics, an uncertain attack $r=(a,b)$ not inside $S$ can become $\true$-relevant in two cases: 1. $a\in S$ and $b$ is a possible (necessary) attacker of $S$ which is not attacked back by $S$, implying that $r$ needs to be added when it is confirmed that $b$ attacks $S$ and no one else in $S$ attacks $b$; and 2. $b\in S$ and $a$ is not attacked by $S$, implying that $r$ is unavoidable to be removed when all of the uncertain attacks from $S$ to $a$ are decided to be absent. Meanwhile, $a$ does not attack other arguments in $S$ except $b$, otherwise $r$ turns out to be irrelevant. Therefore, we have the following proposition.
\begin{proposition}\label{ad_rel}
    Given an AtIAF $I=\langle\A,\emptyset,\R,\R^?\rangle$ and a set of argument $S\subseteq\A$ such that $S$ is not stable-$\ad$ w.r.t. $I$, for each attack $r=(a,b)\in\R^?$ s.t. $a\notin S$ or $b\notin S$, \\
    - $r\in RE^+(I,S,\ad$-$\true)$ iff $a\in S, b\notin S_I^+$ and $S\not\subseteq\{b\}^\unatt_I$; and\\
    - $r\in RE^-(I,S,\ad$-$\true)$ iff $a\notin S_I^-, a\notin S_I^+$ and $b\in S$.
\end{proposition}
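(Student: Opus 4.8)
The plan is to reduce everything to two elementary facts about how a single attack influences the admissibility of $S$, and then to treat the addition and removal characterizations separately (they are dual by the remark after Definition \ref{rel}, but I argue each directly for clarity).

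First I would record the consequences of the hypothesis that $S$ is not stable-$\ad$ w.r.t.\ $I$. Since $S$ is in particular not stable-$\ad$-$\false$, we have $\PV_\ad(I,S)=\true$, so there is a completion $F_0$ with $S\in\ad(F_0)$. From this witness I extract two facts used throughout the backward directions: (i) there is no certain attack between two arguments of $S$ (otherwise $S$ is never conflict-free and $\PV_\ad$ would be $\false$); and (ii) every certain attacker $c\in S_I^-$ is attacked by $S$ in $F_0$, hence there is at least one possible (certain or uncertain) attack from $S$ to $c$. The second tool is a locality principle: in a fixed completion, conflict-freeness of $S$ depends only on attacks with both endpoints in $S$, and defense against an attacker $c$ depends only on whether $S$ attacks $c$; consequently toggling a single attack $(a,b)$ can change the admissibility of $S$ only if $(a,b)$ has an endpoint in $S$, and when $a\in S$ (resp.\ $b\in S$) it can only enlarge the set of arguments attacked by $S$ (resp.\ the set of attackers of $S$).

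For the addition case I would let $I'$ be a witnessing partial completion with $r$ its unique uncertain element and set $F^+=cert(I'+\{r\})$, $F^-=cert(I'-\{r\})$, so $F^+=F^-+r$, $S\in\ad(F^+)$ and $S\notin\ad(F^-)$. For ($\Rightarrow$), the locality principle rules out $a\notin S$: if also $b\notin S$ the toggle is inert, and if $b\in S$ the toggle only adds the attacker $a$, which cannot repair a non-admissible set; hence $a\in S$, and since the case $a,b\in S$ is excluded, $b\notin S$. Adding $r$ enlarges the attacked set of $S$ by exactly $\{b\}$ while leaving its attackers and conflict-freeness unchanged, so the only attacker left undefended by $F^-$ must be $b$: thus $b$ attacks $S$ already in $F^-$ (giving $S\not\subseteq\{b\}^\unatt_I$) and $S$ does not attack $b$ in $F^-$, which forces $b\notin S_I^+$ because $F^-$ contains every certain attack. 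For ($\Leftarrow$) I would build $I'$ by deciding every uncertain attack except $r$: delete all uncertain attacks inside $S$ and all uncertain attacks into $S$ except one realizing an attack of $b$ on $S$ (available since $S\not\subseteq\{b\}^\unatt_I$), keep every uncertain attack from $S$ to the outside except those targeting $b$, and leave $r$ uncertain. Then in $cert(I'+\{r\})$ the set $S$ is conflict-free, its attackers are $S_I^-\cup\{b\}$, each certain attacker other than $b$ is attacked via fact (ii) and $b$ is attacked via $r$, so $S\in\ad$; removing $r$ leaves $b$ attacking $S$ but, since $b\notin S_I^+$ and all uncertain $S$-to-$b$ attacks were deleted, $S$ no longer attacks $b$, so $S\notin\ad$.

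The removal case is symmetric with $a$ playing the role of the pivot. For ($\Rightarrow$), locality forces $a\notin S$ and $b\in S$; deleting $r$ removes exactly the attacker $a$, so $a$ is the sole attacker $F^+$ fails to defend against, whence $S$ does not attack $a$ in either completion (so $a\notin S_I^+$) and $a$ is not a certain attacker of $S$ (so $a\notin S_I^-$, for otherwise $a$ would still attack the admissible $S$ in $F^-$). For ($\Leftarrow$), from $a\notin S_I^-$, $a\notin S_I^+$, $b\in S$ I build $I'$ by deleting all uncertain attacks inside $S$ and all uncertain attacks into $S$, turning on every uncertain $S$-to-outside attack except those targeting $a$, and leaving $r$ uncertain; fact (ii) makes $S$ admissible once $r$ is removed, while adding $r$ inserts the attacker $a$ that $S$ cannot answer (no certain $S$-to-$a$ attack by $a\notin S_I^+$, and the uncertain ones were suppressed). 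The main obstacle I anticipate lies in these backward constructions, where two demands pull in opposite directions: I must switch on enough of $S$'s outgoing uncertain attacks to defend every certain attacker---the step that is impossible without the non-stability hypothesis, via $\PV_\ad(I,S)=\true$ and fact (ii)---yet simultaneously switch off the uncertain outgoing attacks aimed at the pivot so that $r$ is genuinely decisive. Checking that these never conflict (the pivot $a$ never lies in $S_I^-$ for removal, and the missing defense is confined to $b$ for addition) is the crux; the forward directions then reduce to the clean locality-and-case argument isolating the single responsible attacker.
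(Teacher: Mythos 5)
Your proposal is correct and follows essentially the same route as the paper: the forward directions analyse the witnessing completion and use the locality of admissibility to pin the failure on the single toggled attack, and the backward directions build the same kind of partial completion (switch on all uncertain outgoing attacks of $S$ except those aimed at the pivot, switch off the incoming ones except the needed attack from $b$), with $\PV_\ad(I,S)=\true$ guaranteeing that every certain attacker other than the pivot can be counter-attacked. Your explicit observation that the pivot never needs to be defended against (for removal, $a\notin S_I^-$; for addition, the sole gap is $b$) is exactly the implicit crux of the paper's construction.
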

\begin{proof}
    1. ($\Rightarrow$) Since $r\in RE^+(I,S,\ad$-$\true)$, there is a completion $F$ of $I$ where $r$ is present and $S\in\ad(F)$ whereas removal of $r$ will lead $S$ fail to be an $\ad$-extension. Thus according to the property of $\ad$ semantics, we have that $a\in S$, $b$ attacks $S$ in $F$, hence $S\not\subseteq\{b\}^u_I$ holds. Meanwhile, $a$ is the unique attacker of $b$ in $S$, otherwise $S$ is still an $\ad$-extension after removal of $r$ in $F$, yielding a contradiction. Hence $b\notin S_I^+$ holds.

    ($\Leftarrow$) Since $S$ is not stable-$\ad$, $PosVer_\ad(I,S)=true$ holds, then $S^-_I\cap S^u_I\neq\emptyset$. Let $I'=I+\{(b,s)\in\R^?\mid s\in S\}-\{(s,b)\in\R^?\mid s\in S\}+\{(s,v)\in\R^?\mid s\in S, v\notin S\}-\{(u,s)\in\R^?\mid u\notin S, s\in S\}$ and $I'_r$ be the IAF obtained from making $r$ uncertain from $cert(I')$. We can see that $S$ is attacked by $b$ and attacking back other attackers except $b$ in $I'_r$, thus $S\in\ad(cert(I'_r+\{r\}))$ whereas $S\notin\ad(cert(I'_r-\{r\}))$, which yields $r\in RE^+(I,S,\ad$-$\true)$.

    2. ($\Rightarrow$) Since $r\in RE^-(I,S,\ad$-$\true)$, there is a completion $F$ of $I$ where $r$ is absent and $S\in\ad(F)$ whereas addition of $r$ will lead $S$ fail to be an $\ad$-extension. Thus according to the property of $\ad$ semantics, we have that $a\notin S$, $b\in S$ and $a\notin S^+_F$, yielding $a\notin S^+_I$. Meanwhile, $a\notin S^-_F$, otherwise $S\notin\ad(F)$, yielding contradiction. Hence $a\notin S_I^-$ holds.

    ($\Leftarrow$) Let $I'=I-\{(s,a)\in\R^?\mid s\in S\}+\{(s,v)\in\R^?\mid s\in S, v\notin S\}-\{(u,s)\in\R^?\mid u\notin S, s\in S\}$ and $I'_r$ be the IAF obtained from making $r$ uncertain from $cert(I')$. We can see that $S$ does not attack $a$ but attacking back all of the attackers in $I'_r$. And since $b\in S$, $S\in\ad(cert(I'_r-\{r\}))$ whereas $S\notin\ad(cert(I'_r+\{r\}))$ holds, yielding $r\in RE^-(I,S,\ad$-$\true)$.
\end{proof}
    Under $\sta$ semantics, note that $S$ is stable-$\sta$ only when $S$ is necessary or impossible to attack all of the other arguments, hence only the uncertain attacks from $S$ toward the arguments which have not yet been attacked by $S$ are relevant.
\begin{proposition}\label{st_rel}
    Given an AtIAF $I=\langle\A,\emptyset,\R,\R^?\rangle$ and a set of argument $S\subseteq\A$ such that $S$ is not stable-$\sta$ w.r.t. $I$, for each attack $r=(a,b)\in\R^?$ s.t. $a\notin S$ or $b\notin S$, \\
    - $r\in RE^+(I,S,\sta$-$\true)$ iff $a\in S$ and $b\notin S_{I}^+$; and\\
    - $r\notin RE^-(I,S,\sta$-$\true)$.
\end{proposition}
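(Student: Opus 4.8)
The plan is to analyse both claims by reducing to a comparison of the two completions obtained from the witnessing partial completion: writing $F^+=cert(I'+\{r\})$ and $F^-=cert(I'-\{r\})$, these are two AFs on the vertex set $\A$ whose attack sets differ by exactly the single attack $r=(a,b)$. Throughout I would use the characterization that $S\in\sta(F)$ iff $S$ is conflict-free in $F$ and $S^+_F=\A\setminus S$, together with two monotonicity facts that do all the work: removing an attack cannot create a conflict inside $S$ (so conflict-freeness of $S$ is preserved when passing from $F^+$ to $F^-$), and adding an attack can only enlarge $S^+$ (so $S^+_{F^-}\subseteq S^+_{F^+}$). Since $I$ is an AtIAF, every completion has argument set $\A$ and attack set $\R\cup P$ for some $P\subseteq\R^?$, and $\R\subseteq\R'$ for the certain attacks of any partial completion $I'$; I would record this at the outset.

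For the first item, consider the forward direction and suppose $r\in RE^+(I,S,\sta$-$\true)$, so $S\in\sta(F^+)$ but $S\notin\sta(F^-)$. By conflict-freeness monotonicity, $S$ is still conflict-free in $F^-$, so the failure must be $S^+_{F^-}\neq\A\setminus S$; combined with $S^+_{F^-}\subseteq S^+_{F^+}=\A\setminus S$, the single differing attack $r$ must be responsible for covering the missing argument, forcing $a\in S$ and that the only such missing argument is $b$ with $b\notin S^+_{F^-}$. Since $b\in S^+_{F^+}=\A\setminus S$ we get $b\notin S$, and since $\R\subseteq\R'\subseteq$ the attacks of $F^-$, the absence of $b$ from $S^+_{F^-}$ rules out any certain attack from $S$ to $b$, i.e. $b\notin S^+_I$. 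For the converse I would exploit that $S$ not being stable-$\sta$ yields $PosVer_\sta(I,S)=\true$, hence a completion $F^\ast=\langle\A,\R\cup P^\ast\rangle$ with $S\in\sta(F^\ast)$; I then take the partial completion that keeps exactly the attacks $\R\cup(P^\ast\setminus\{(s,b)\mid s\in S\})$ as certain and leaves $r$ as the unique uncertain element. Adding $r$ makes $a\in S$ attack $b$ while every other outside argument stays attacked as in $F^\ast$, giving $S\in\sta$, whereas removing $r$ leaves $b$ unattacked by $S$ (no certain attacker since $b\notin S^+_I$, and all uncertain $S$-to-$b$ attacks were switched off), so $S\notin\sta$; conflict-freeness holds in both AFs because the only added attack $r=(a,b)$ has $a\in S$ and $b\notin S$.

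The second item is the easy direction: if $r=(a,b)$ with $a\notin S$ or $b\notin S$, then adding $r$ to any completion in which $S$ is stable leaves $S$ conflict-free (the only way to break this would be $a,b\in S$, which is excluded) and can only enlarge $S^+$, so $S$ remains stable; hence there is no partial completion in which removing $r$ turns a non-stable $S$ into a stable one, i.e. $r\notin RE^-(I,S,\sta$-$\true)$. The main obstacle is the converse construction of the first item: one must choose the remaining uncertain attacks so that a single AF simultaneously certifies stability after adding $r$ and non-stability after removing it, while keeping $r$ the unique undecided element. The two hypotheses are exactly what make this possible, as $b\notin S^+_I$ guarantees that switching off the other $S$-to-$b$ attacks genuinely isolates $b$, and $PosVer_\sta(I,S)=\true$ supplies a baseline completion ensuring every other outside argument remains attacked.
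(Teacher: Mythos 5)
Your proof is correct and follows essentially the same route as the paper's: the forward direction extracts that $a$ must be the unique $S$-attacker of $b$ in the witnessing completion (hence $a\in S$ and $b\notin S^+_I$), and the converse uses $PosVer_\sta(I,S)=\true$ to obtain a stable completion, switches off the uncertain $S$-to-$b$ attacks, and leaves $r$ as the sole undecided element. The only cosmetic difference is that you subtract from a concrete witnessing completion $F^\ast$ while the paper builds a canonical partial completion by turning on all uncertain attacks from $S$ to arguments outside $S$ other than $b$; both yield the same certificate, and your handling of the second item just spells out what the paper dismisses as obvious.
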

\begin{proof}
    1. ($\Rightarrow$) Since $r\in RE^+(I,S,\sta$-$\true)$, there is a completion $F$ of $I$ where $r$ is present and $S\in\sta(F)$ whereas removal of $r$ will lead $S$ fail to be a $\sta$-extension. Thus according to the property of $\sta$ semantics, we have that $a$ is the unique attacker of $b$ from $S$ in $F$. Hence $a\in S$ and $b\notin S^+_I$.

    ($\Leftarrow$) Since $S$ is not stable-$\sta$, $PosVer_\sta(I,S)=true$ holds, then $S^u_I\setminus S=\emptyset$. Let $I'=I-\{(s,b)\in\R^?\mid s\in S\}+\{(s,v)\in\R^?\mid s\in S, v\notin S\}$ and $I'_r$ be the IAF obtained from making $r$ uncertain from $cert(I')$. We can see that $S$ attacks all of the arguments outside $S$ except $b$ in $I'_r$, thus $S\in\sta(cert(I'_r+\{r\}))$ whereas $S\notin\sta(cert(I'_r-\{r\}))$ holds, yielding $r\in RE^+(I,S,\sta$-$\true)$.

    2. It is obvious by the property of $\sta$ semantics.
\end{proof}
\begin{example}
    Figure \ref{fig2} illustrates the AtIAF $I_{att}$ transformed from $I_{ex}$ in Figure \ref{fig1} by Proposition \ref{att-reduce}. For the argument set $S=\{a,b,f\}$, we have $RE^+(I_{att},S\cup\{w\},\ad$-$\true)=\{(b,d),(w,d)\}$ and $RE^-(I_{att},S\cup\{w\},\ad$-$\true)=\{(f,b)\}$ according to Proposition \ref{ad_rel}, and $RE^+(I_{att},S\cup\{w\},\sta$-$\true)=\{(b,d),(f,g),(w$ $,h),(w,d)\}$ according to Proposition \ref{st_rel}. Then using Proposition \ref{att-reduce}, we transfer the results to the original input $I_{ex}$ and $S$, and obtain $RE^+(I_{ex},S,\ad$-$\true)=\{(b,d)\}$, $RE^-(I_{ex},S,\ad$-$\true)=\{(f,b),d\}$, $RE^+(I_{ex},S,\sta$-$\true)=\{(b,d)$ $,(f,g)\}$, and $RE^-(I_{ex},S,\sta$-$\true)=\{h,d\}$. The reason why removing $h,d$ is $\sta$-$\true$-relevant lies in that $h,d$ are impossible to be attacked by $S$ once $(b,d)$ is removed, so they must be removed so that $S$ can attack all of the other arguments.      
\end{example}     
    Proposition \ref{ad_rel} and \ref{st_rel} imply that $RE^+(I,S,\sigma$-$\true)$ disjoints with $RE^-(I,S,\sigma$-$\true)$ under $\sigma\in\{\ad,\sta\}$. However, under $\sigma\in\{\co,\gr,\pr\}$, this property does not hold, and one can see a counterexample as follows.
\begin{example}
    Let us consider the IAF $I=\langle\{a,b,c\},\emptyset,\{(b,b),(b,c)\},\{(a,b),(b,a),$ $(a,c)\}\rangle$, $S=\{a\}$ and $\sigma\in\{\co,\gr,\pr\}$. If $(a,c)$ is removed, then $(a,b)$ is needed to be removed for $S$ to be a $\sigma$-extension, otherwise $c\notin S$ will be defended by $S$, hence $(a,b)\in RE^-(I,S,\sigma$-$\true)$. On the other hand, if $(b,a)$ is added, then $(a,b)$ is needed to be added for $S$ to reach $\sigma$-$\true$ status, hence $(a,b)\in RE^+(I,S,\sigma$-$\true)$.
\end{example}
    Now we discuss $\co$ semantics. Before proposing a tractable method for identifying relevant attacks under $\co$, we introduce an algorithm named $OutRel$. With input being an AtIAF $I$, a set of arguments $S$ and an attack $r=(a,b)$ where $b\notin S$ is not necessarily self-attacked, $OutRel(I,S,r)$ determines if there exists a completion $F$ such that $S\in\co(F)$ and $a$ is the unique attacker of $b$ except the attackers attacked by $S$. If $r\in\R^?$ and $OutRel(I,S,r)=true$, then addition of $r$ is $\co$-$\true$-relevant. This is because removing $r$ from $F$ will lead $b$ defended by $S$ and then $S$ fails to be a $\co$-extension. Algorithm \ref{alg_out_rel} shows the details of $OutRel$. At first in line 1, we check whether there are attackers of $b$ which are impossible to be attacked by $S$. If the answer is yes, then it is impossible for $a$ to become the unique attacker of $b$ except $S_F^+$ thus $false$ becomes the output of the algorithm. If the answer is no, we construct the partial completion $I_1$ of $I$ by adding attacks from $S$ to the certain attackers of $b$ 
    (line 2) and then removing all of the uncertain attacks to $b$ except those from $a$ (line 3). Finally, we check whether $S$ is possible to become a $\co$-extension from the IAF $I_1$, whose result is obtained as the output of the algorithm (line 4). It is obvious that the construction of $I_1$ in the algorithm can be done in polynomial time, and $PosVer_\co$ has been proven in $P$ \cite{baumeister2018verification,fazzinga2020revisiting}, hence the complexity of algorithm $OutRel$ is polynomial. 
    
\vspace{-6mm}
\begin{algorithm}\label{alg_out_rel}
\DontPrintSemicolon
  \SetAlgoLined
  \KwIn {An AtIAF $I=\langle\A,\emptyset,\R,\R^?\rangle$, a set of arguments $S\subseteq\A$ and an attack $(a,b)\in\R\cup\R^?$ where $b\notin S$ and $(b,b)\notin\R$}
  \KwOut{Whether there is a completion $F$ of $I$ such that $\{b\}_F^-\setminus S_F^+=\{a\}$ and $S\in\co(F)$}
  \lIf{$(\{b\}_I^-\setminus\{a\})\cap S_I^\unatt\neq\emptyset$}{\Return{false}}
  $I_0\gets I + \{(s,v)\in\R^?\mid s\in S, v\in \{b\}_I^-\setminus\{a\}\}$\;
  $I_1\gets I_0 - \{(u,b)\in\R^?\mid u\neq a\}$\;
  \Return{$PosVer_\co(I_1,S)$}\;
  \caption{$OutRel(I,S,(a,b))$}\label{alg3}
\end{algorithm}
\vspace{-5mm}

    Now we present Theorem \ref{co_rel_theo}, which gives various tractable methods for identifying relevance for three different types of attacks that cover all of the possible attacks except attacks inside $S$: (1) attacks with both attackers and targets are outside $S$, (2) attacks whose targets are inside $S$, and (3) attacks whose attackers are inside $S$. Note that as the same as for $\ad$ and $\sta$ semantics, the attacks whose both attackers and targets belong to $S$ must be removed for satisfying conflict-freeness and then reaching $\co$-$\true$ status, so removing them is $\co$-$\true$-relevant whereas adding is not. 
\begin{lemma}\label{lem_out_rel}
    Given an attIAF $I=\langle\A,\emptyset,\R,\R^?\rangle$ and a set of arguments $S\subseteq\A$ such that $S$ is not stable-$\co$ w.r.t. $I$, for each attack $r=(a,b)\in\R\cup\R^?$ s.t. $b\notin S$ and $(b,b)\notin\R$, there exists a completion $F$ of $I$ such that $\{b\}_F^-\setminus S_F^+=\{a\}$ and $S\in\co(F)$ iff $OutRel(I,S,r)=true$.  
\end{lemma}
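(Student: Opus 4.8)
The plan is to prove the equivalence by its two directions, relying throughout on one basic observation: since $b\notin S$ and $S$ is complete, $b$ can never be defended by $S$, so in any completion $F$ with $S\in\co(F)$ we have $b\notin\Gamma_F(S)$, i.e.\ $\{b\}_F^-\setminus S_F^+\neq\emptyset$.

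For the direction from $OutRel(I,S,r)=true$ to the existence of $F$, I would first note that a $true$ output means line~1 did not fire, so $(\{b\}_I^-\setminus\{a\})\cap S_I^\unatt=\emptyset$, and that $PosVer_\co(I_1,S)=true$ supplies a completion $F_1$ of $I_1$ with $S\in\co(F_1)$; as $I_1\in part(I)$, $F_1$ is also a completion of $I$. The key step is to read off $\{b\}_{F_1}^-$ from the construction of $I_1$. Line~3 deletes every uncertain attack into $b$ except the one from $a$, so $\{b\}_{F_1}^-\subseteq\{b\}_I^-\cup\{a\}$; and for each certain attacker $v\in\{b\}_I^-\setminus\{a\}$, either $v\in S_I^+$ already or line~1 guarantees some $(s,v)\in\R^?$ with $s\in S$, which line~2 makes certain, so in both cases $v\in S_{F_1}^+$. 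Hence $\{b\}_{F_1}^-\setminus S_{F_1}^+\subseteq\{a\}$, and combined with the basic observation (the difference is non-empty) it equals $\{a\}$, making $F_1$ the required completion.

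For the converse I would take a completion $F$ of $I$ with $\{b\}_F^-\setminus S_F^+=\{a\}$ and $S\in\co(F)$ and show both steps of $OutRel$ succeed. Line~1 cannot fire, since any $v\in(\{b\}_I^-\setminus\{a\})\cap S_I^\unatt$ would be a certain attacker of $b$ with $v\neq a$ and $v\notin S_F^+$, placing such a $v$ in $\{b\}_F^-\setminus S_F^+$ and contradicting that this set is $\{a\}$. To witness $PosVer_\co(I_1,S)=true$ I would edit $F$ into a completion $F'$ of $I_1$: add every uncertain attack $(s,v)$ with $s\in S$ and $v\in\{b\}_I^-\setminus\{a\}$, delete every uncertain attack $(u,b)$ with $u\neq a$ that is present in $F$, and keep all other choices of $F$; since $(a,b)$ is untouched it stays present, so $F'$ is indeed a completion of $I_1$. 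The heart of the argument is verifying $S\in\co(F')$.

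Here the decisive fact is the identity $S_{F'}^+=S_F^+$. To establish it I would record two memberships forced by conflict-freeness of $S$ in $F$: each target $v\in\{b\}_I^-\setminus\{a\}$ of an added attack lies in $\{b\}_F^-\setminus\{a\}\subseteq S_F^+$, whence $v\notin S$; and each deleted $(u,b)$ present in $F$ has $u\in\{b\}_F^-$ with $u\neq a$, hence $u\in S_F^+$ and again $u\notin S$. Thus every added attack runs from $S$ to a target already in $S_F^+$, and no attack emanating from $S$ is deleted, so $S_{F'}^+=S_F^+$ and $S$ remains conflict-free in $F'$. Admissibility is preserved because the edits only add attacks out of $S$ and delete attacks into the non-member $b$. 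Completeness follows from a case analysis on $\Gamma_{F'}(S)$: for $b$ we still have $a\in\{b\}_{F'}^-$ and $a\notin S_{F'}^+$, so $b\notin\Gamma_{F'}(S)$; for each $v\in\{b\}_I^-\setminus\{a\}$ an attacker $s\in S$ remains in $F'$ while $s\notin S_{F'}^+$, so $v\notin\Gamma_{F'}(S)$; and every other argument keeps its attacker set, so by $S_{F'}^+=S_F^+$ it lies in $\Gamma_{F'}(S)$ iff it lies in $\Gamma_F(S)$, which is contained in $S$. Hence $\Gamma_{F'}(S)\subseteq S$ and $S\in\co(F')$, giving $PosVer_\co(I_1,S)=true$ and therefore $OutRel(I,S,r)=true$. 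I expect this completeness check to be the main obstacle: the real danger is that pruning and adding attacks around $b$ makes $S$ defend some argument outside $S$, and the identity $S_{F'}^+=S_F^+$ is exactly what prevents it, while the guard in line~1 and the assumption $(b,b)\notin\R$ are what keep the edit well defined and inside $part(I)$.
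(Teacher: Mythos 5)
Your proposal is correct and follows essentially the same route as the paper's proof: both directions are handled by the same completion surgery (adding the uncertain attacks from $S$ onto the certain attackers of $b$ other than $a$, deleting the other uncertain attacks into $b$) together with the observation that $b\notin S$ and completeness force $\{b\}_F^-\setminus S_F^+\neq\emptyset$. Your explicit verification that $S_{F'}^+=S_F^+$ and the case analysis for $\Gamma_{F'}(S)\subseteq S$ merely spell out what the paper compresses into ``by the property of $\co$ semantics.''
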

\begin{proof}
    ($\Rightarrow$) Assume $(\{b\}_I^-\setminus\{a\})\cap S_I^u\neq\emptyset$, which means that there is some attacker of $b$ except $a$ which is impossible to be attacked by $S$ in $I$, then this will also happen in $F$ and $a$ fails to be the unique attacker of $b$ except arguments of $S^+_F$, a contradiction occurs. Let $\R_F$ be the set of attacks present in $F$ and $F'=\langle\A,\R_F'\rangle$ where $\R_F'=\R_F\cup\{(s,v)\in\R^?\mid s\in S, v\in S^+_F\}\setminus\{(u,b)\in\R^?\mid u\neq a\}$. We can see that $F'$ is also a completion of $I$ where $S\in\co(F')$ still holds according to the property of $\co$ semantics. Meanwhile, $\{(s,v)\in\R^?\mid s\in S, v\in \{b\}_I^-\setminus\{a\}\}\subseteq\R_F'$ and $\{(u,b)\in\R^?\mid u\neq a\}\not\subseteq\R_F'$, thus $F'$ is also a completion of $I_1$ obtained in the process of $OutRel(I,S,r)$. Hence $OutRel(I,S,r)=PosVer_\co(I_1,S)=true$.

    ($\Leftarrow$) Since $OutRel(I,S,r)=true$, there is a completion $F$ of both $I$ and $I_1$ such that $S\in\co(F)$, where $I_1$ is the IAF obtained in the process of $OutRel(I,S,r)$. We can see that in $I_1$, all the possible and necessary attackers of $b$ are attacked by $S$ except $a$. Thus $a$ is the unique possible attacker of $b$ except the arguments attacked by $S$ in any completion of $I_1$. And since $S\in\co(F)$, $\{b\}_F^-\setminus S_F^+=\{a\}$, otherwise $\{b\}_F^-\setminus S_F^+=\emptyset$ is derived, then $b\notin S$ will be defended by $S$, a contradiction occurs. Hence $F$ satisfies the condition, which completes the proof.
\end{proof}
\begin{theorem}\label{co_rel_theo} 
    Given an AtIAF $I=\langle\A,\emptyset,\R,\R^?\rangle$ and a set of arguments $S\subseteq\A$ such that $S$ is not stable-$\co$ w.r.t. $I$, for each attack $r=(a,b)\in\R^?$ s.t. $a\notin S$ or $b\notin S$, the following holds.\\
    (1) If $a\notin S$ and $b\notin S$, \\
    - $r\in RE^+(I,S,\co$-$\true)$ iff $(b,b)\notin\R$ and $OutRel(I,S,r)=true$; and\\
    - $r\notin RE^-(I,S,\co$-$\true)$.\\
    (2) If $a\notin S$ and $b\in S$,  \\
    - $r\notin RE^+(I,S,\co$-$\true)$;  and\\
    - $r\in RE^-(I,S,\co$-$\true)$ iff $a\notin S_I^+$ and $PosVer_\co(I-\{(s,a)\in\R^?\mid s\in S\},S)=true$. \\
    (3) If $a\in S$ and $b\notin S$, \\
    - $r\in RE^+(I,S,\co$-$\true)$ iff $(b,b)\notin\R$ and $OutRel(I,S,r)=true$, or $S\not\subseteq \{b\}^\unatt_I$, $b\notin S^+_I$ and $PosVer_\co(I+\{(a,b)\},S)=true$; and\\
    - $r\in RE^-(I,S,\co$-$\true)$ iff $\exists v\in\A\setminus(S\cup\{b\})$ s.t. $(b,v)\in\R\cup\R^?$, $(v,v)\notin\R$ and $OutRel(I,S,(b,v))=true$.
\end{theorem}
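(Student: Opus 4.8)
\emph{Overall strategy.} By Definition \ref{rel}, addition of $r$ is $\co$-$\true$-relevant for $S$ exactly when some completion $F$ of $I$ has $r$ present, $S\in\co(F)$, and $S\notin\co(F')$ for the AF $F'$ obtained by deleting $r$ from $F$; removal is the dual. The first step is a locality principle for toggling a single attack $r=(a,b)$: only the attacker set of $b$ changes, so for every $c\neq b$ the membership $c\in\Gamma_{F}(S)$ is governed by data that $r$ does not touch. Hence when $a\notin S$ the set $S^{+}_{F}$ is unchanged and only $b$'s defence status can flip, whereas when $a\in S$ the attack additionally toggles whether $b\in S^{+}_{F}$ and can therefore flip the defence status of the targets of $b$ as well. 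Since $S\in\co(F)$ means $S\in\ad(F)$ together with $\Gamma_{F}(S)=S$, breaking completeness must proceed either by destroying admissibility or by forcing a non-member into $\Gamma_{F}(S)$, and the locality principle pins down which arguments can be responsible. Throughout, the hypothesis that $S$ is not stable-$\co$ guarantees, as in the proofs of Propositions \ref{ad_rel} and \ref{st_rel}, the base complete completions that the backward directions perturb.

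\emph{The degenerate claims.} Cases (1)-removal and (2)-addition follow from monotonicity. Deleting an attack into $b$ can only enlarge $\Gamma_{F}(S)$ and, when $a\notin S$, never harms admissibility; so in (1) adding $r$ back cannot create a defended non-member, giving $r\notin RE^{-}$, and in (2), where $r$ attacks into $S$, its presence is only a liability, so deletion preserves completeness, giving $r\notin RE^{+}$. For (2)-removal, adding $r=(a,b)$ with $b\in S$ breaks completeness precisely when $S$ fails to attack its new attacker $a$; thus $r\in RE^{-}$ iff some completion with $r$ absent has $S\in\co$ and $a\notin S^{+}_{F}$, which forces $a\notin S^{+}_{I}$ and is realized exactly when $PosVer_{\co}$ succeeds on $I$ with the uncertain $S$-to-$a$ attacks deleted. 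Case (1)-addition is the ``$b$ becomes defended'' mechanism: with $r$ present $S$ is complete, and deleting $r$ makes $b$ defended iff $a$ was the unique attacker of $b$ outside $S^{+}_{F}$, which is the predicate certified by $OutRel$ in Lemma \ref{lem_out_rel}. The hypothesis $(b,b)\notin\R$ is essential: a self-attacked $b$ would force $S$ to attack $b$ through some $s\in S$, and that $s$ is then itself an uncovered attacker of $b$, so $b$ can never be defended.

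\emph{The hard case (3).} Here $a\in S$ and $b\notin S$, so toggling $r$ has two independent effects and the characterizations become disjunctions. For $RE^{+}$ I would show deletion breaks completeness in exactly two ways. First, since $a\in S$ the argument $b$ is never defended while $r$ is present, and after deletion $b$ may become defended; this is again the $OutRel$ predicate with the same self-loop caveat, giving the first disjunct. Second, deletion may cause $S$ to stop attacking $b$ altogether, so some $s\in S$ that $b$ attacks is no longer defended and admissibility collapses; this needs $b$ to be able to attack $S$ ($S\not\subseteq\{b\}^{\unatt}_{I}$), $S$ not to attack $b$ with certainty ($b\notin S^{+}_{I}$), and $S$ to be completable once $r$ is present ($PosVer_{\co}(I+\{(a,b)\},S)=\true$), giving the second disjunct; the converses fix the remaining uncertain attacks to isolate exactly one effect. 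For $RE^{-}$, adding $r$ makes $S$ attack $b$, and the only way this can break completeness is indirectly: some $v\notin S\cup\{b\}$ attacked by $b$, with $b$ its last uncovered attacker, is forced into $\Gamma_{F}(S)\setminus S$. This is precisely $OutRel(I,S,(b,v))=\true$ under $(v,v)\notin\R$, and one checks that any witnessing completion necessarily has $r$ absent (if $r$ were present then $b\in S^{+}_{F}$, contradicting $b$ being an uncovered attacker of $v$), so it is a legitimate removal witness.

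\emph{Main obstacle.} The delicate part is case (3): proving the listed disjuncts are \emph{jointly exhaustive}, so that no third mechanism can break or preserve completeness, and \emph{separately realizable} by an actual completion in which $r$ is pivotal. This rests on the locality principle to rule out interference between the ``$b$-defended'' and ``$S$-defence-collapses'' effects, on the self-loop exclusions $(b,b)\notin\R$ and $(v,v)\notin\R$ (without which the defence mechanism is impossible, exactly as in case (1)), and on upgrading the existential $PosVer_{\co}$ and $OutRel$ subroutines to completions that simultaneously place $r$ in the required presence-state and satisfy all side conditions. The remaining verifications are the routine forward/backward constructions patterned on Propositions \ref{ad_rel}, \ref{st_rel} and Lemma \ref{lem_out_rel}.
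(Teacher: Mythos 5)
Your proposal is correct and follows essentially the same route as the paper's proof: the same three-way case split, the same two mechanisms for breaking completeness (loss of admissibility versus a non-member entering $\Gamma_F(S)$), and the same use of $OutRel$ via Lemma \ref{lem_out_rel} together with $PosVer_\co$ for the witnessing completions. Your ``locality principle'' is just an explicit packaging of the exhaustiveness argument the paper carries out case by case, and the constructions you defer to routine verification are exactly the ones the paper performs.
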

\begin{proof}
    First, Lemma \ref{lem_out_rel} actually is about the soundness and completeness of $OutRel$ algorithm. Then based on Lemma \ref{lem_out_rel}, we prove this theorem.
    
    (1) If $a\notin S$ and $b\notin S$,
    
    1. $(\Rightarrow)$ Since $r\in RE^+(I,S,\co$-$\true)$, there is a completion $F$ of $I$ where $r$ is present, $a,b\notin S$ and $S\in\co(F)$ whereas removal of $r$ will lead $S$ fail to be a $\co$-extension. This happens only when removal of $r$ will lead $b$ defended by $S$ in $F$, thus $(b,b)\notin\R$ and $\{b\}_F^-\setminus S_F^+=\{a\}$, otherwise $b$ can not be defended by $S$ after removal of $r$, yielding a contradiction. Therefore, $OutRel(I,S,r)=true$ holds by Lemma \ref{lem_out_rel}.
    
    $(\Leftarrow)$ $OutRel(I,S,r)=true$ means that there is a completion $F$ of $I$ that $\{b\}_F^-\setminus S_F^+=\{a\}$ and $S\in\co(F)$ by Lemma \ref{lem_out_rel}. We can see that removal of $(a,b)$ will lead $b$ defended by $S$ and then $S$ fail to be a $\co$-extension in $F$, thus $r\in RE^+(I,S,\co$-$\true)$.

    2. It is obvious by the property of $\co$ semantics.\\
    
    (2) If $a\notin S$ and $b\in S$,
    
    1. It is obvious by the property of $\co$ semantics. 

    2. $(\Rightarrow)$ Since $r\in RE^-(I,S,\co$-$\true)$, there is a completion $F$ of $I$ where $r$ is absent, $a\notin S, b\in S$ and $S\in\co(F)$ whereas addition of $r$ will lead $S$ fail to be a $\co$-extension. This happens only when addition of $r$ will lead $S$ fail to be admissible in $F$, which implies that $a\notin S^+_F$, then yielding $a\notin S^+_I$ and that $F$ is also the completion of $I-\{(s,a)\in\R^?\mid s\in S\}$. And since $S\in\co(F)$, $PosVer_\co(I-\{(s,a)\in\R^?\mid s\in S\},S)=true$ holds.

    $(\Leftarrow)$ Since $a\notin S_I^+$, any completion of $I'=I-\{(s,a)\in\R^?\mid s\in S\}$ satisfies that $S$ does not attack $a$. Since $PosVer_\co(I',S)=true$, there is a completion $F$ of both $I'$ and $I$ that $S\in\co(F)$ and $a\notin S^+_F$, thus addition of $r$ will lead $S$ fail to be a $\co$-extension in $F$, yielding $r\in RE^-(I,S,\co$-$\true)$.\\

    (3) If $a\in S$ and $b\notin S$,

    1. $(\Rightarrow)$ Since $r\in RE^+(I,S,\co$-$\true)$, there is a completion $F$ of $I$ where $r$ is present, $a\in S,b\notin S$ and $S\in\co(F)$ whereas removal of $r$ will lead $S$ fail to be a $\co$-extension. This happens in two cases: 1. removal of $r$ causes that $b$ is defended by $S$, thus $(b,b)\notin\R$ and $OutRel(I,S,r)=true$ holds for the same reason as when $a,b\notin S$; and 2. removal of $r$ causes that $S$ fails to be admissible, i.e., $S$ is attacked by $b$ but $r$ is the unique attack from $S$ to $b$. Thus $S\not\subseteq \{b\}^\unatt_I$ and $b\notin S^+_I$ holds. And we can see that $F$ is also a completion of $I+\{(a,b)\}$, hence $PosVer_\co(I+\{(a,b)\},S)=true$. 

    $(\Leftarrow)$ If $(b,b)\notin\R$ and $OutRel(I,S,r)=true$, then $r\in RE^+(I,S,\co$-$\true)$ for the same reason as when $a,b\notin S$. If $S\not\subseteq \{b\}^\unatt_I$, $b\notin S^+_I$ and $PosVer_\co(I+\{(a,b)\},S)=true$, assume that $F=\langle\A,\R_F\rangle$ is one of completions of $I+\{(a,b)\}$ that $S\in\co(F)$. Let $F'=\langle\A,\R_F\cup\{(b,s)\in\R^?\mid s\in S\}\setminus\{(s,b)\in\R^?\mid s\in S, s\neq a\}\rangle$. We can see that $b$ attacks $S$ while $a$ becomes the unique attacker of $b$ from $S$ and $F'$ is also the completion of $I$ since $b\notin S^+_I$. Meanwhile, $S\in\co(F')$ due to the property of $\co$ semantics. Therefore, there is a completion $F'$ of $I$ that $S\in\co(F')$ whereas removal of $r$ will lead $S$ fail to be a $\co$-extension, hence $r\in RE^+(I,S,\co$-$\true)$.

    2. $(\Rightarrow)$ Since $r\in RE^-(I,S,\co$-$\true)$, there is a completion $F$ of $I$ where $r$ is absent, $a\in S,b\notin S$ and $S\in\co(F)$ whereas addition of $r$ will lead $S$ fail to be a $\co$-extension. This happens only when addition of $r$ will lead some new argument $v\neq b$ attacked by $b$ defended by $S$, hence $(b,v)\in\R\cup\R^?$ and $(v,v)\notin\R$, otherwise $v$ cannot be defended by $S$ in any completion, a contradiction occurs. Meanwhile, this means that $b$ is the unique attacker of $v$ except the arguments attacked by $S$ in $F$, thus $OutRel(I,S,(b,v))=true$ holds by Lemma \ref{lem_out_rel}. In conclusion, there exist an argument $v$ satisfying the condition, which completes the proof.

    $(\Leftarrow)$ Let $v\in\A\setminus(S\cup\{b\})$ be the argument that $OutRel(I,S,(b,v))=true$. Then there is a completion $F$ of $I$ that $\{v\}_F^-\setminus S_F^+=\{b\}$ and $S\in\co(F)$ by Lemma \ref{lem_out_rel}, thus $r$ is absent in $F$, otherwise all of the attackers of $v$ including $b$ are attacked by $S$ and then $v\notin S$ is defended by $S$, a contradiction occurs. Meanwhile, addition of $r$ will lead $v$ defended by $S$ and $S$ fail to be a $\co$-extension, hence $r\in RE^-(I,S,\co$-$\true)$.
\end{proof}

\begin{example}
    Consider the IAF $I_{att}$ in Figure \ref{fig2} and the set of arguments $S=\{a,b,w\}$. We can see that the $\co$-$\true$-relevant attacks for $S$ are $(f,g),(f,b),(w,h)$ $,(b,d),(w,d)$ and $(b,e)$, together covering all of the three conclusions listed in Theorem \ref{co_rel_theo} when an uncertain element can be relevant under $\co$: (1) $(f,g)\in RE^+$ since $OutRel(I_{att},S,(f,g)))$ $=true$; (2) $(f,b)\in RE^-$ since $f\notin S_{I_{att}}^+$ and $PosVer_\co(I_{att},$ $S)=true$; (3) $(w,h)\in RE^+$ since $OutRel(I_{att},S,(w,h))=true$; $(b,d),(w,d)\in RE^+$ since $d$ attacks $S$ and $S$ is possible to be a $\co$-extension after addition of $(b,d)$ or $(w,d)$; and $(b,e)\in RE^-$ since $OutRel(I_{att},S,(e,f))=true$.
\end{example}
    By Proposition \ref{ad_rel}, \ref{st_rel} and Corollary \ref{att-reduce-cor}, we can conclude that deciding \relevance\ for verification under $\ad$ and $\co$ semantics is in \textit{P}. In Theorem \ref{co_rel_theo}, since $PosVer$, $OutRel$ and other operations can all be done in polynomial time, deciding \relevance\ under $\co$ is also in \textit{P}.
\begin{corollary}
    For $j\in\{\ad,\sta,\co\}\times\{\true,\false\}$, $j$-\relevance\ is in P.
\end{corollary}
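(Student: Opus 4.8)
The plan is to assemble results already in hand, since the substantive work lies in the earlier characterizations. First, I would collapse all four combinations of action and verification status into two membership tests. By the duality observed after Definition~\ref{rel}, $RE^+(I,S,\sigma$-$\true)=RE^-(I,S,\sigma$-$\false)$ and $RE^-(I,S,\sigma$-$\true)=RE^+(I,S,\sigma$-$\false)$, so deciding whether action $\textbf{a}$ of $\textbf{e}$ is $\sigma$-$\false$-relevant is exactly deciding whether the opposite action of $\textbf{e}$ is $\sigma$-$\true$-relevant. Hence for each $\sigma\in\{\ad,\sta,\co\}$ it suffices to decide, in polynomial time, membership of a given uncertain element in $RE^+(I,S,\sigma$-$\true)$ and in $RE^-(I,S,\sigma$-$\true)$.

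Second, I would invoke Corollary~\ref{att-reduce-cor} to assume without loss of generality that the input is an AtIAF: the construction of $I_{att}$ and target $S\cup\{w\}$ in Proposition~\ref{att-reduce} is polynomial, an uncertain attack is carried over as an uncertain attack, and an uncertain argument $e$ is queried through the attack $(w,e)$ with the addition/removal actions swapped. Third, I would settle the trivial case by testing whether $S$ is stable-$\sigma$: for $\sigma\in\{\ad,\sta,\co\}$ both $PosVer_\sigma$ and $NecVer_\sigma$ are in $P$, so the test is polynomial, and by Proposition~\ref{rel_sta_relation} a stable $S$ makes every uncertain element $\sigma$-irrelevant, so the answer is ``no''. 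Otherwise $S$ is unstable, which is precisely the hypothesis required by Propositions~\ref{ad_rel} and~\ref{st_rel} and Theorem~\ref{co_rel_theo}.

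Finally, with $S$ unstable I would classify the queried attack $r=(a,b)$ and read off the appropriate characterization. If both endpoints lie in $S$, then conflict-freeness forces $r\in RE^-(I,S,\sigma$-$\true)$ and $r\notin RE^+(I,S,\sigma$-$\true)$. Otherwise I apply Proposition~\ref{ad_rel} for $\sigma=\ad$, Proposition~\ref{st_rel} for $\sigma=\sta$, and Theorem~\ref{co_rel_theo} for $\sigma=\co$; each stated condition is a conjunction of membership tests on $S_I^+$, $S_I^-$ and $\{b\}^\unatt_I$, together with calls to $OutRel$ and $PosVer_\co$, all computable in polynomial time. Composing polynomially many polynomial-time checks gives the claimed $P$ bound.

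I expect no real mathematical obstacle here, because the difficult content has already been discharged by the preceding propositions, the theorem, and the AtIAF reduction; the remaining risk is purely in the bookkeeping. The step that most warrants care is verifying that every case is genuinely covered: both actions and both verification statuses through duality, all three attack types plus the attacks inside $S$ that the characterizations explicitly exclude, and --- most subtly --- the argument-to-attack translation of Proposition~\ref{att-reduce}, which must correctly transport both the queried element and the swap of action when the query concerns an uncertain argument rather than an attack.
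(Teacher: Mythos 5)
Your proposal is correct and follows essentially the same route as the paper: reduce to the AtIAF case via Proposition~\ref{att-reduce} and Corollary~\ref{att-reduce-cor}, dispose of the $\false$ status by the addition/removal duality and of the stable case by Proposition~\ref{rel_sta_relation} together with the polynomial $PosVer$/$NecVer$ tests, and then read membership off the polynomial-time characterizations in Propositions~\ref{ad_rel} and~\ref{st_rel} and Theorem~\ref{co_rel_theo} (plus the conflict-freeness observation for attacks inside $S$). The paper states this more tersely, but the decomposition and the ingredients are identical.
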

    Finally, we give a method for proving $\Sigma_2^p$ lower bound for $\pr$-\relevance, by giving a reduction from $PosVer$ problem w.r.t. AtIAFs under $\pr$, which is proven to be $\Sigma_2^p$-$c$ in \cite{baumeister2018verification}.
\begin{proposition}
    Given an AtIAF $I=\langle\A,\emptyset,\R,\R^?\rangle$ and a set of arguments $S\subseteq\A$, let $I'=\langle\A\cup\{w_1\},\{w_2\},\R\cup\{(w_2,s)\mid s\in S\},\R^?\cup\{(w_1,w_2)\}$ where $w_1,w_2\notin\A$. \\
    - $PosVer_\pr(I,S)=true$ iff $(w_1,w_2)\in RE^+(I'+\{w_2\},S\cup\{w_1\},\pr$-$\true)$; and\\
    - $PosVer_\pr(I,S)=true$ iff $w_2\in RE^-(I'-\{(w_1,w_2)\},S\cup\{w_1\},\pr$-$\true)$.
\end{proposition}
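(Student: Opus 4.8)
The plan is to reduce both equivalences to a single structural lemma about the gadget and to exploit the special role of the two fresh arguments $w_1,w_2$. First I would record the basic facts forced by the construction: $w_1$ is never attacked (no attack of $I'$ targets $w_1$), the only potential attacker of $w_2$ is $w_1$ through the attack $(w_1,w_2)$, and $w_2$ attacks exactly the arguments of $S$. These facts make the two ``negative'' halves immediate. In statement~1, once $w_2$ is certain (in $I'+\{w_2\}$) and $(w_1,w_2)$ is removed, $w_2$ is an unattacked attacker of every $s\in S$, so $S\cup\{w_1\}$ cannot defend its members, fails to be admissible, and hence is not $\pr$-$\true$. Symmetrically, in statement~2, adding $w_2$ while $(w_1,w_2)$ has already been deleted produces the same obstruction. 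Thus in both statements the ``opposite action'' never yields $\pr$-$\true$, and it remains only to characterise when the ``intended action'' does.

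The core lemma I would prove is the following correspondence. For any completion $F=\langle\A,\R_F\rangle$ of $I$, let $G$ be $F$ augmented with $w_1,w_2$, the attacks $\{(w_2,s)\mid s\in S\}$, and the attack $(w_1,w_2)$; and let $G'$ be $F$ together with the isolated argument $w_1$. Then $S\cup\{w_1\}\in\pr(G)$ iff $S\in\pr(F)$, and likewise $S\cup\{w_1\}\in\pr(G')$ iff $S\in\pr(F)$. For the forward direction in $G$ I would verify admissibility of $S\cup\{w_1\}$ directly: $w_1\in S\cup\{w_1\}$ defends every $s\in S$ against its new attacker $w_2$, while the remaining attackers of each $s$ lie in $\A$ and are handled by $S$ exactly as in $F$. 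Maximality then follows by observing that any admissible $T'\supsetneq S\cup\{w_1\}$ must exclude $w_2$ (which conflicts with $w_1$), so that $T'\cap\A\supsetneq S$ is admissible in $F$, contradicting the maximality of $S$ in $F$. The converse adjoins $w_1$ to a hypothetical larger admissible set in $F$ and transports it into $G$; the case of $G'$ is the same argument with $w_1$ isolated and $w_2$ absent.

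Finally I would assemble the equivalences. If $PosVer_\pr(I,S)=\true$, I pick a completion $F$ with $S\in\pr(F)$ and build a partial completion $J$ of $I'+\{w_2\}$ (resp.\ of $I'-\{(w_1,w_2)\}$) that fixes every attack of $\R^?$ to agree with $\R_F$ while leaving only $(w_1,w_2)$ (resp.\ only $w_2$) undecided. Then $cert(J+\{(w_1,w_2)\})=G$ (resp.\ $cert(J-\{w_2\})=G'$), so the core lemma gives $\pr$-$\true$ on the intended side, while the negative half gives failure on the opposite side; hence $(w_1,w_2)\in RE^+(I'+\{w_2\},S\cup\{w_1\},\pr\text{-}\true)$ (resp.\ $w_2\in RE^-(I'-\{(w_1,w_2)\},S\cup\{w_1\},\pr\text{-}\true)$). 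Conversely, any witnessing partial completion restricts on $\A$ to a completion $F$ of $I$, and the lemma forces $S\in\pr(F)$, giving $PosVer_\pr(I,S)=\true$. The main obstacle is the maximality step of the core lemma: the admissibility bookkeeping is routine, but controlling preferred (i.e.\ maximal admissible) extensions relies precisely on the three observations that $w_1$ is unattacked, that $w_2$ is barred by its conflict with $w_1$, and that $w_2$ attacks only $S$ so that no argument of $\A\setminus S$ ever needs defending against it.
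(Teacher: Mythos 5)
Your proposal is correct and follows essentially the same route as the paper's proof: augment a completion $F$ of $I$ with $w_1,w_2$ and the gadget attacks to witness relevance in the forward direction, use the fact that an unattacked $w_2$ destroys admissibility of $S\cup\{w_1\}$ for the "opposite action", and restrict a witnessing completion back to $\A$ for the converse. Your explicit "core lemma" with the maximality bookkeeping just spells out what the paper compresses into "by the construction of $I'$ and the property of $\pr$ semantics".
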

\begin{proof}
    1. $(\Rightarrow)$ Since $PosVer_\pr(I,S)=true$, there is a completion $F=\langle\A_F,\R_F\rangle$ of $I$ such that $S\in\pr(F)$. According to the construction of $I'$ and the property of $\pr$ semantics, it is easy to see that $F'=\langle\A_F\cup\{w_1,w_2\},\R_F\cup\{(w_2,s)\mid s\in S\}\cup\{(w_1,w_2\}\rangle$ is also a completion of $I'+\{w_2\}$ and $S\cup\{w_1\}\in\pr(F')$. However, removal of $(w_1,w_2)$ in $F'$ will lead $S\cup\{w_1\}$ not admissible thus fail to be a $\pr$-extension, thus $(w_1,w_2)\in RE^+(I'+\{w_2\},S\cup\{w_1\},\pr$-$\true)$ holds.

    $(\Leftarrow)$ $(w_1,w_2)\in RE^+(I'+\{w_2\},S\cup\{w_1\},\pr$-$\true)$ implies that there is a completion $F=\langle\A_F,\R_F\rangle$ of $I'+\{w_2\}$ where $S\cup\{w_1\}\in\pr(F)$ and $(w_1,w_2)$ is present, thus $S\in\pr(F')$ holds where $F'=F_{\downarrow\A_F\setminus\{w_1,w_2\}}$ by the construction of $I'$. Obviously $F'$ is also a completion of $I$, hence $PosVer_\pr(I,S)=true$ holds.

    2. $(\Rightarrow)$ Since $PosVer_\pr(I,S)=true$, there is a completion $F=\langle\A_F,\R_F\rangle$ of $I$ such that $S\in\pr(F)$. According to the construction of $I'$ and the property of $\pr$ semantics, it is easy to see that $I_F=\langle\A_F\cup\{w_1\},\{w_2\},\R_F\cup\{(w_2,s)\mid s\in S\},\emptyset\rangle$ is a partial completion of $I'$, while $S\cup\{w_1\}\in\pr(cert(I_F-\{w_2\}))$ and $S\cup\{w_1\}\notin\pr(cert(I_F+\{w_2\}))$ hold. Hence $w_2\in RE^-(I'-\{(w_1,w_2)\},S\cup\{w_1\},\pr$-$\true)$.

    $(\Leftarrow)$ $w_2\in RE^-(I'-\{(w_1,w_2)\},S\cup\{w_1\},\pr$-$\true)$ implies that there is a completion $F=\langle\A_F,\R_F\rangle$ of $I'-\{(w_1,w_2)\}$ such that $S\cup\{w_1\}\in\pr(F)$ and $w_2$ is present in $F$, thus $S\in\pr(F')$ where $F'=F_{\downarrow\A_F\setminus\{w_1\}}$ by the construction of $I'$. Obviously $F'$ is also a completion of $I$, hence $PosVer_\pr(I,S)=true$ holds.
\end{proof}

\section{Strong Relevance for Stability of Verification}
    In this section we introduce the notion of strong relevance, which characterizes the necessity of resolution of relevant elements for reaching stability. Also, we define the related decision problem and give precise complexity results for each common semantics.  
\subsection{Definition of strong relevance}    
    For a set of arguments where the $j$-stability can be reached, it is interesting to explore which elements are necessary to be added or removed in all partial completions where $j$-stability holds. Such elements are defined as follows to be {\it strongly $j$-relevant} for $S$ w.r.t. the IAF.
\begin{definition}[Strong relevance for stability of a set of arguments]
    Given an IAF $I=\langle\A,\A^?,\R,\R^?\rangle$, a set of arguments $S\subseteq\A\cup\A^?$, a verification status $j$ satisfying that there is at least one $I_p\in part(I)$ such that $S$ is stable-$j$ w.r.t. $I_p$, and an uncertain element $e\in\A^?\cup\R^?$, addition (resp., removal) of $e$ is strongly $j$-relevant for $S$ w.r.t. $I$ iff for each $I'\in part(I-\{e\})$(resp., $part(I+\{e\})$), $S$ is not stable-$j$ w.r.t. $I'$. We use $SRE^+(I,S,j)$ and $SRE^-(I,S,j)$ to respectively denote the uncertain elements whose addition and removal are strongly $j$-relevant for $S$ w.r.t. $I$.
\end{definition}
    Obviously for each uncertain attack inside $S$, its removal is strongly -$\true$-relevant since all the semantics we discuss require conflict-freeness. And for each uncertain argument belonging to $S$, its addition must be strongly -$\true$-relevant since every argument of $S$ should be present so that $S$ can be an extension.
    
    Figure \ref{rel_str_rel_fig} illustrates the inclusion-relationships of various relevant and strongly relevant sets of elements. It is obvious that a strongly relevant element is also relevant, i.e., $SRE^+(I,S,j)\subseteq RE^+(I,S,j)$ and $SRE^-(I,S,j)\subseteq RE^-(I,S,j)$. However, differently from relevance, -$\true$ strong relevance of an action within \{addition, removal\} does not always coincide with -$\false$ strong relevance of its opposite action. We give such instances in the subsequent example. Meanwhile, if addition of an element $e$ is strongly $\sigma$-$\true$(resp., $\false$)-relevant, then it is trivial that removal of $e$ is not $\sigma$-$\true$(resp., $\false$)-relevant.  
\begin{example}
    Continue with the IAF $I_{ex}$ and the set of arguments $S=\{a,b\}$ in Figure \ref{fig1}. We have $SRE^+(I_{ex},S,\co)=\{(f,g)\}$ and $SRE^-(I_{ex},S,\co)=\{h,(b,e),(f,b)\}$. For $\sigma\in\{\ad,\co,\pr\}$, removal of $(f,b)$ is strongly $\sigma$-$\true$-relevant since it is impossible for $S$ to be a $\sigma$-extension once $(f,b)$ is present. Although being $\sigma$-$\true$-relevant, removal of $d$ is not strongly $\sigma$-$\true$-relevant, since when $(b,d)$ is added, $d$ will no longer influence the verification of $S$, thus addition of $d$ will not lead $S$ to fail to be a $\sigma$-extension. One can see that $(f,b)\in SRE^-(I_{ex},S,\ad$-$\true)$ but $\notin SRE^+(I_{ex},S,\ad$-$\false)$. Moreover, $(f,g)\in SRE^+(I_{ex},S,\co$-$\true)$ but $\notin SRE^-(I_{ex},S,\co$-$\false)$. Therefore, addition and removal action do not hold the dual property in terms of strong relevance.  
\end{example}
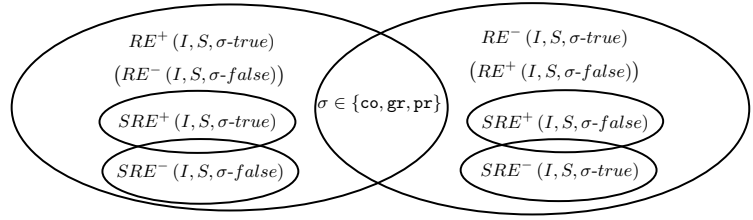
\begin{figure}[t]
\centering 
\tikzset{every picture/.style={line width=0.75pt}} 

\begin{tikzpicture}[x=0.5pt,y=0.5pt,yscale=-1,xscale=1]

\draw   (0.33,142) .. controls (0.33,98.92) and (74.18,64) .. (165.27,64) .. controls (256.36,64) and (330.2,98.92) .. (330.2,142) .. controls (330.2,185.08) and (256.36,220) .. (165.27,220) .. controls (74.18,220) and (0.33,185.08) .. (0.33,142) -- cycle ;
\draw   (66.8,152.9) .. controls (66.8,139.92) and (99.98,129.4) .. (140.9,129.4) .. controls (181.82,129.4) and (215,139.92) .. (215,152.9) .. controls (215,165.88) and (181.82,176.4) .. (140.9,176.4) .. controls (99.98,176.4) and (66.8,165.88) .. (66.8,152.9) -- cycle ;
\draw   (230,143) .. controls (230,98.82) and (303.5,63) .. (394.17,63) .. controls (484.83,63) and (558.33,98.82) .. (558.33,143) .. controls (558.33,187.18) and (484.83,223) .. (394.17,223) .. controls (303.5,223) and (230,187.18) .. (230,143) -- cycle ;
\draw   (69,190.4) .. controls (69,176.87) and (102.18,165.9) .. (143.1,165.9) .. controls (184.02,165.9) and (217.2,176.87) .. (217.2,190.4) .. controls (217.2,203.93) and (184.02,214.9) .. (143.1,214.9) .. controls (102.18,214.9) and (69,203.93) .. (69,190.4) -- cycle ;
\draw   (345,152.4) .. controls (345,139.42) and (378.18,128.9) .. (419.1,128.9) .. controls (460.02,128.9) and (493.2,139.42) .. (493.2,152.4) .. controls (493.2,165.38) and (460.02,175.9) .. (419.1,175.9) .. controls (378.18,175.9) and (345,165.38) .. (345,152.4) -- cycle ;
\draw   (340,189.45) .. controls (340,176.5) and (373.18,166) .. (414.1,166) .. controls (455.02,166) and (488.2,176.5) .. (488.2,189.45) .. controls (488.2,202.4) and (455.02,212.9) .. (414.1,212.9) .. controls (373.18,212.9) and (340,202.4) .. (340,189.45) -- cycle ;

\draw (140.9,152.9) node  [xscale=0.75,yscale=0.75]  {$SRE^{+}\left( I,S,\sigma \text{\mbox{-}} true\right)$};
\draw (143.1,190.4) node  [xscale=0.75,yscale=0.75]  {$SRE^{-}\left( I,S,\sigma \text{\mbox{-}} false\right)$};
\draw (419.1,153.4) node  [xscale=0.75,yscale=0.75]  {$SRE^{+}\left( I,S,\sigma \text{\mbox{-}} false\right)$};
\draw (415.1,189.45) node  [xscale=0.75,yscale=0.75]  {$SRE^{-}\left( I,S,\sigma \text{\mbox{-}} true\right)$};
\draw (142.2,117.9) node  [xscale=0.75,yscale=0.75]  {$\left( RE^{-}\left( I,S,\sigma \text{\mbox{-}} false\right)\right)$};
\draw (410.94,115.2) node  [xscale=0.75,yscale=0.75]  {$\left( RE^{+}\left( I,S,\sigma \text{\mbox{-}} false\right)\right)$};
\draw (411.2,88.9) node  [xscale=0.75,yscale=0.75]  {$RE^{-}\left( I,S,\sigma \text{\mbox{-}} true\right)$};
\draw (144.2,91.9) node  [xscale=0.75,yscale=0.75]  {$RE^{+}\left( I,S,\sigma \text{\mbox{-}} true\right)$};
\draw (230,130.4) node [anchor=north west][inner sep=0.75pt]  [xscale=0.75,yscale=0.75]  {$\sigma\in\{\co,\gr,\pr\}$};
\end{tikzpicture}
\caption{Inclusion-relationships of sets of relevant and strong relevant elements, where an overlap of two ellipses means that the related sets may intersect under some semantics, and one ellipse within another indicates a subset under all semantics.}
\label{rel_str_rel_fig}
\vspace{-5mm}
\end {figure}
\subsection{Complexity analysis of strong relevance} 
Now we discuss the computational complexity for the decision problem of identifying strong relevance, which is formulated as follows. 
\begin{flushleft}
\begin{tabular}{p{1\textwidth}}
\toprule
$j$-\srelevance\ of action $\textbf{a}$ \\
\hline
\textbf{Given:}\ \ An IAF $I=\langle\A,\A^?,\R,\R^?\rangle$, a set of arguments $S\subseteq\A\cup\A^?$, a verification status $j$ s.t. there is at least one $I'\in part(I)$ such that $S$ is stable-$j$ w.r.t. $I'$, an action $\textbf{a}\in\{$addition,removal$\}$, and an uncertain element $\textbf{e}\in\A^?\cup\R^?$. \\
\textbf{Question:}\ \ Is \textbf{a} of $\textbf{e}$ strongly $j$-relevant for $S$ w.r.t. $I$? \\
\bottomrule
\end{tabular}
\vspace{-3mm}
\end{flushleft}

    We start with a proposition giving a reduction from \srelevance\ problem to $PosVer$ and $NecVer$ problem. Directly according to the definition of stability, $S$ can reach $\sigma$-$\true$ stability only when $PosVer_\sigma(I,S)=true$, while $S$ can reach $\sigma$-$\false$ stability only when $NecVer_\sigma(I,S)=false$. Therefore, addition of an uncertain element $e$ is strongly $\sigma$-$\true$-relevant iff $S$ is impossible to become $\sigma$-$\true$ after the removal of $e$ in $I$, i.e., $PosVer_\sigma(I-\{e\},S)=false$.  
\begin{proposition}\label{srel_alt}
    Given an IAF $I=\langle\A,\A^?,\R,\R^?\rangle$, a set of arguments $S\subseteq\A\cup\A^?$ and semantics $\sigma$, if $PosVer_\sigma(I,S)=true$ (resp., $NecVer_\sigma(I,S)=false$), then for each uncertain element $e\in\A^?\cup\R^?$, \\
    - $e\in SRE^+(I,S,\sigma$-$\true)$ (resp., $e\in SRE^+(I,S,\sigma$-$\false)$) iff $PosVer_\sigma(I-\{e\},S)=false$ (resp., $NecVer_\sigma(I-\{e\},S)=true$); and\\
    - $e\in SRE^-(I,S,\sigma$-$\true)$ (resp., $e\in SRE^-(I,S,\sigma$-$\false)$) iff $PosVer_\sigma(I+\{e\},S)=false$ (resp., $NecVer_\sigma(I+\{e\},S)=true$).
\end{proposition}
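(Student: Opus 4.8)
The plan is to reduce the universally quantified stability condition in the definition of strong relevance to a single $PosVer$ or $NecVer$ query, using the correspondence established just before the definition of stability: $S$ is stable-$\sigma$-$\true$ w.r.t. an IAF $J$ iff $NecVer_\sigma(J,S)=\true$, and $S$ is stable-$\sigma$-$\false$ w.r.t. $J$ iff $PosVer_\sigma(J,S)=\false$. I would treat the addition/$\true$ case in detail; the removal variant (replacing $I-\{e\}$ by $I+\{e\}$) and the two $\false$ variants are entirely symmetric, swapping the roles of $PosVer$ and $NecVer$ and of ``some completion'' versus ``every completion''.

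Two structural facts about partial completions drive the argument. First, if $I'\in part(I-\{e\})$ then every completion of $I'$ is a completion of $I-\{e\}$, since deciding all uncertain elements of $I'$ amounts to deciding all those of $I-\{e\}$; consequently $PosVer$ is monotone downward and $NecVer$ upward along specialization. Second, any single completion $F$ of $I-\{e\}$ is realized as the unique completion of the fully decided partial completion $I_F\in part(I-\{e\})$ with $cert(I_F)=F$ and no remaining uncertain elements, and for such $I_F$ both $NecVer_\sigma(I_F,S)$ and $PosVer_\sigma(I_F,S)$ collapse to the single test $S\in\sigma(F)$.

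For the ``if'' direction, assume $PosVer_\sigma(I-\{e\},S)=\false$. Taking any $I'\in part(I-\{e\})$, the first fact gives that every completion of $I'$ is a completion of $I-\{e\}$ and hence has $S\notin\sigma$, so $PosVer_\sigma(I',S)=\false$ and in particular $NecVer_\sigma(I',S)=\false$; thus $S$ is not stable-$\sigma$-$\true$ w.r.t. $I'$. Since this holds for every $I'\in part(I-\{e\})$, we get $e\in SRE^+(I,S,\sigma$-$\true)$. For the ``only if'' direction I would argue by contraposition: if $PosVer_\sigma(I-\{e\},S)=\true$, pick a completion $F$ of $I-\{e\}$ with $S\in\sigma(F)$ and form the fully decided $I_F\in part(I-\{e\})$ from the second fact; then $NecVer_\sigma(I_F,S)=\true$, so $S$ is stable-$\sigma$-$\true$ w.r.t. $I_F$, which witnesses $e\notin SRE^+(I,S,\sigma$-$\true)$.

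The role of the precondition $PosVer_\sigma(I,S)=\true$ (resp.\ $NecVer_\sigma(I,S)=\false$) is only to guarantee well-formedness of the strong-relevance problem, namely the existence of some $I_p\in part(I)$ at which $S$ is stable-$j$; by the second fact this existence is exactly equivalent to the assumed $PosVer$/$NecVer$ condition, so the hypothesis is consistent and needs no separate verification. I expect the main subtlety to be keeping the quantifier alternation straight: strong relevance is a ``for all partial completions, instability holds'' statement, and the crux is collapsing this universal quantifier---via the monotonicity fact in one direction and the single-completion realization in the contrapositive of the other---onto the one extremal IAF $I\mp\{e\}$ whose completion set is precisely the union, over all relevant partial completions, of their completion sets.
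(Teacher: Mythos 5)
Your proof is correct and takes the same route as the paper, which simply states that the result follows directly from the definitions of $PosVer$, $NecVer$ and strong relevance; you have filled in exactly the quantifier bookkeeping (monotonicity of completions along partial completions in one direction, the fully decided partial completion as a witness in the contrapositive) that the paper leaves implicit.
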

\begin{proof}
    The proof can be done directly according to the definition of $PosVer$, $NecVer$ and strong relevance.
\end{proof}
    Using the complexity results of $PosVer$ and $NecVer$ in \cite{baumeister2018verification,fazzinga2020revisiting}, we directly obtain the upper bounds of \srelevance\ according to Proposition \ref{srel_alt}.
\begin{corollary}
    The following results hold:\\
    1. for $j\in\{\ad,\sta,\co,\gr\}\times\{\true,\false\}$, $j$-\srelevance\ is in $P$;\\
    2. $\pr$-$\true$-\srelevance\ is in $\Pi_2^p$; and\\
    3. $\pr$-$\false$-\srelevance\ is in $coNP$.
\end{corollary}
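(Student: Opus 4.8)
The plan is to read off every case directly from Proposition \ref{srel_alt}, which reduces each strong-relevance query to a single $PosVer$ or $NecVer$ evaluation on a slightly modified IAF. First I would check that the precondition attached to the \srelevance\ problem---that some $I'\in part(I)$ has $S$ stable-$j$ w.r.t. $I'$---supplies exactly the hypothesis needed by Proposition \ref{srel_alt}. Indeed, $S$ can be stable-$\sigma$-$\true$ in a partial completion only if $PosVer_\sigma(I,S)=true$, and stable-$\sigma$-$\false$ only if $NecVer_\sigma(I,S)=false$, so on any admissible input the four equivalences of Proposition \ref{srel_alt} hold and may be applied without further assumption.

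Next I would observe that for a fixed uncertain element $e$, forming $I-\{e\}$ or $I+\{e\}$ merely reassigns $e$ among the certain and uncertain components $\A,\A^?,\R,\R^?$ (deleting incident attacks when an uncertain argument is removed), which is a polynomial-time operation. Hence each strong-relevance decision amounts to one call to the appropriate verification oracle on the modified IAF, preceded only by polynomial preprocessing. For part 1, when $\sigma\in\{\ad,\sta,\co,\gr\}$ both $PosVer_\sigma$ and $NecVer_\sigma$ are in $P$; since each of the four membership tests in Proposition \ref{srel_alt} reduces to such a query (or its negation, which stays in $P$), $j$-\srelevance\ is in $P$.

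The only delicate point, which also settles parts 2 and 3, is the polarity of the reduction. For $\pr$-$\true$ we have $e\in SRE^+(I,S,\pr$-$\true)$ iff $PosVer_\pr(I-\{e\},S)=false$ and $e\in SRE^-(I,S,\pr$-$\true)$ iff $PosVer_\pr(I+\{e\},S)=false$; because $PosVer_\pr$ lies in $\Sigma_2^p$, testing for a \emph{negative} answer places the problem in $co\Sigma_2^p=\Pi_2^p$. For $\pr$-$\false$ we have $e\in SRE^+(I,S,\pr$-$\false)$ iff $NecVer_\pr(I-\{e\},S)=true$ and $e\in SRE^-(I,S,\pr$-$\false)$ iff $NecVer_\pr(I+\{e\},S)=true$; because $NecVer_\pr$ is in $coNP$, testing for a \emph{positive} answer leaves the problem in $coNP$. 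Thus the whole result follows once the known complexities of $PosVer_\pr$ and $NecVer_\pr$ from \cite{baumeister2018verification,fazzinga2020revisiting} are plugged in; the main thing to get right is keeping track of this inversion, since the $\true$ cases query a negative $PosVer$ answer and therefore complement the class, whereas the $\false$ cases query a positive $NecVer$ answer and do not.
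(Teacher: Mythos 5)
Your proposal is correct and takes essentially the same route as the paper, which likewise derives the corollary directly from Proposition \ref{srel_alt} together with the known complexities of $PosVer$ and $NecVer$. Your explicit tracking of the polarity (the $\true$ cases query a negative $PosVer_\pr$ answer, hence land in $\Pi_2^p$, while the $\false$ cases query a positive $NecVer_\pr$ answer, hence stay in $coNP$) is exactly the point the paper leaves implicit, and your handling of it is accurate.
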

    In order to prove $coNP$ completeness of $\pr$-$\false$-\srelevance, we reduce from $NecVer$ problem w.r.t. AtIAFs under $\pr$, which has been proven to be $coNP$-$c$ in \cite{baumeister2018verification}.
\begin{proposition}
    Given an AtIAF $I=\langle\A,\emptyset,\R,\R^?\rangle$ and a set of arguments $S\subseteq\A$, let IAF $I'=\langle\A,\{w\},\R,\R^?\cup\{(w,w)\}\rangle$ where $w\notin\A$. The following results hold: \\
    - $NecVer_\pr(I,S)=true$ iff $w\in SRE^+(I'-\{(w,w)\},S,\pr$-$\false)$; and\\
    - $NecVer_\pr(I,S)=true$ iff $(w,w)\in SRE^-(I'+\{w\},S,\pr$-$\false)$.
\end{proposition}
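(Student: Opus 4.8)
The plan is to reduce both equivalences to Proposition~\ref{srel_alt}, which already expresses strong $\pr$-$\false$-relevance in terms of $NecVer_\pr$; the only work left is to identify the frameworks that the two actions produce and to check that proposition's side conditions. Throughout I would lean on two elementary facts about the fresh argument $w$: (i) an argument with no incoming and no outgoing attacks is unattacked and attacks nobody, so it lies in every preferred extension and therefore forbids any $S$ with $w\notin S$ from being preferred; and (ii) a self-attacking argument with no other incident attacks belongs to no conflict-free set, so its presence changes neither admissibility nor maximality of any $S\subseteq\A$.

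For the first equivalence I would write $J=I'-\{(w,w)\}=\langle\A,\{w\},\R,\R^?\rangle$, in which $w$ is an isolated uncertain argument. Applying the addition/$\false$ part of Proposition~\ref{srel_alt} with $e=w$ and $\sigma=\pr$ gives that $w\in SRE^+(J,S,\pr$-$\false)$ iff $NecVer_\pr(J-\{w\},S)=true$, \emph{provided} the side condition $NecVer_\pr(J,S)=false$ holds. That side condition follows from fact~(i): any completion $F$ of $J$ that includes $w$ has $w$ in every preferred extension, so $S\notin\pr(F)$ there, whence $NecVer_\pr(J,S)=false$. Finally, since $w$ carries no attacks in $J$, the operation $J-\{w\}$ removes nothing else and yields $\langle\A,\emptyset,\R,\R^?\rangle=I$, so the right-hand side is exactly $NecVer_\pr(I,S)=true$, closing the equivalence.

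For the second equivalence I would write $K=I'+\{w\}=\langle\A\cup\{w\},\emptyset,\R,\R^?\cup\{(w,w)\}\rangle$, so that $w$ is now certain and $(w,w)$ is the only uncertain attack incident to it. Applying the removal/$\false$ part of Proposition~\ref{srel_alt} with $e=(w,w)$ gives that $(w,w)\in SRE^-(K,S,\pr$-$\false)$ iff $NecVer_\pr(K+\{(w,w)\},S)=true$, under the side condition $NecVer_\pr(K,S)=false$. The side condition again follows from fact~(i), now using the completion of $K$ in which $(w,w)$ is \emph{absent}: there $w$ is an isolated unattacked certain argument, so $S$ is not preferred. It then remains to evaluate $K+\{(w,w)\}=\langle\A\cup\{w\},\emptyset,\R\cup\{(w,w)\},\R^?\rangle$; by fact~(ii) the self-attacking isolated $w$ is inert, so the completions of $K+\{(w,w)\}$ are in status-preserving bijection with those of $I$, giving $NecVer_\pr(K+\{(w,w)\},S)=NecVer_\pr(I,S)$ and hence the claim.

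The routine part is the algebraic bookkeeping of $+$ and $-$ on IAFs and the verification that $J-\{w\}=I$ and that $K+\{(w,w)\}$ differs from $I$ only by the loop argument $w$. The genuinely load-bearing steps are facts~(i) and~(ii): that an unattacked $w$ destroys the $\pr$-membership of $S$ (which both supplies the two side conditions and drives the $NecVer_\pr=false$ witnesses) and that a self-looping isolated $w$ is inert for $\pr$-membership (which yields $NecVer_\pr(K+\{(w,w)\},S)=NecVer_\pr(I,S)$). If one preferred to bypass Proposition~\ref{srel_alt}, these same two facts also support a direct argument that builds, from a completion witnessing $NecVer_\pr(I,S)$, the partial completions required by the definition of $SRE$, mirroring the earlier $PosVer_\pr$ reduction.
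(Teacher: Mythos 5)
Your proposal is correct and follows essentially the same route as the paper: verify the well-definedness/side condition ($NecVer_\pr$ of the starting IAF is $false$) via the completion in which $w$ is present but unattacked, then reduce strong $\pr$-$\false$-relevance to $NecVer_\pr$ of the modified IAF via Proposition~\ref{srel_alt}, and identify that IAF with $I$ (exactly, for the first claim; up to an inert self-attacking $w$, for the second, which the paper handles by projecting onto $\A$). Your two supporting facts about the unattacked and the self-looping isolated argument are precisely the observations the paper's proof relies on.
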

\begin{proof}
    1. Consider the completion $F=cert(I'-\{(w,w)\}+\{w\})$ of $I'-\{(w,w)\}$ where $w\notin S$ and $S$ is $\pr$-$\false$ in $F$ since $S$ does not contain $w$. Thus there is at least one stable-$\pr$-$\false$ partial completion for $S$ w.r.t. $I'-\{(w,w)\}$. 
    
    $(\Rightarrow)$ Assume $w\notin SRE^+(I'-\{(w,w)\},S,\pr$-$\false)$, which implies that there is a completion $F$ of $I'-\{(w,w)\}-\{w\}$ such that $S\notin\pr(F)$. By the construction of $I'$, $F$ is also a completion of $I$, which contradicts $NecVer_\pr(I,S)=true$.

    ($\Leftarrow$) We have $NecVer_\pr(I'-\{(w,w)\}-\{w\},S)=true$. Since $I'-\{(w,w)\}-\{w\}=I$, $NecVer_\pr(I,S)=true$ holds.

    2. Consider the completion $F=cert(I'+\{w\}-\{(w,w)\})$ of $I'+\{w\}$ where $w\notin S$ and $S$ is $\pr$-$\false$ in $F$ since $S$ does not contain $w$. Thus there is at least one stable-$\pr$-$\false$ partial completion for $S$ w.r.t. $I'+\{w\}$. 
    
    $(\Rightarrow)$ Assume $(w,w)\notin SRE^-(I'+\{w\},S,\pr$-$\false)$, which implies that there is a completion $F$ of $I'+\{w\}+\{(w,w)\}$ such that $S\notin\pr(F)$. Due to the property of $\pr$ semantics and the construction of $I'$, we can see that $S\notin\pr(F_{\downarrow\A})$ holds and $F_{\downarrow\A}$ is also a completion of $I$, which contradicts $NecVer_\pr(I,S)=true$.

    ($\Leftarrow$) Assume $NecVer_\pr(I,S)=false$, i.e., there is a completion of $F=\langle\A,\R_F\rangle$ such that $S\notin\pr(F)$. Let $F'=\langle\A\cup\{w\},\R_F\cup\{(w,w)\}\rangle$. Then it is easy to see that $S\notin\pr(F')$ due to the property of $\pr$ semantics and $F'$ is also a completion of $I'+\{w\}+\{(w,w)\}$, hence $NecVer_\pr(I'+\{w\}+\{(w,w)\},S)=false$. Since $(w,w)\in SRE^-(I'+\{w\},S,\pr$-$\false)$, we have $NecVer_\pr(I'+\{w\}+\{(w,w)\},S)=true$ by Proposition \ref{rel_sta_relation}, yielding a contradiction. 
\end{proof}
    Next we show $\Pi_2^p$ completeness of $\pr$-$\true$-\srelevance\ by reducing from $\Pi_2SAT$ problem. Given a pair of disjoint boolean variable sets $X,Y$ and a formula $\varphi$ in conjunctive normal form, $\Pi_2SAT$ asks whether for any assignment $\tau_X$ for $X$, there always exists an assignment $\tau_Y$ for $Y$ such that $\varphi[\tau_X,\tau_Y]=\true$ holds. We translate a $\Pi_2SAT$ instance to an IAF in Definition \ref{sat} based on the translation used in \cite{baumeister2021acceptance}, which is illustrated in Figure \ref{reduce_srel_true}.
\begin{definition}\label{sat}
    Let $(\varphi,X,Y)$ be an instance of $\Pi_2SAT$. And let $\varphi=\wedge_ic_i$ and $c_i=\vee_j\alpha_j$ where the $\alpha_j$ are the literals belonging to $X\cup Y$ that occur in clause $c_i$. We define the IAF $I_\varphi=\langle\A,\A^?,\R,\R^?\rangle$ as follows:\\
    - $\A=\{\overline{x_i}\mid x_i\in X\}\cup\{y_i,\overline{y_i}\in Y\}\cup\{c_i\mid c_i\in\varphi\}\cup\{\varphi,w\}$;\\
    - $\A^?=\{x_i\mid x_i\in X\}$;\\
    - $\R=\{(x_i,\overline{x_i})\mid x_i\in X\}\cup\{(y_i,\overline{y_i}),(\overline{y_i},y_i)\mid y_i\in Y\}\cup\{(x_k,c_i)\mid x_k\in c_i\}\cup\{(\overline{x_k},c_i)\mid \neg x_k\in c_i\}\cup\{(y_k,c_i)\mid y_k\in c_i\}\cup\{(\overline{y_k},c_i)\mid \neg y_k\in c_i\}\cup\{(c_i,\varphi)\mid c_i\in\varphi\}\cup\{(c_i,c_i)\mid c_i\in\varphi\}\cup\{(w,x_i),(w,\overline{x_i})\mid x_i\in X\}\cup\{(w,y_i),(w,\overline{y_i})\mid y_i\in Y\}\cup\{(w,w)\}$; and\\
    - $\R^?=\{(\varphi,\varphi),(\varphi,w)\}$.
\end{definition}
    The following proposition shows the reduction from $\Pi_2SAT$ to $\pr$-$\true$-\srelevance\ problem.
\begin{proposition}
    Given an instance $(\varphi,X,Y)$ of $\Pi_2SAT$, let $I_\varphi$ be the IAF constructed by Definition \ref{sat}. The following results hold: \\
    - $(\varphi,X,Y)\in\Pi_2SAT$ iff $(\varphi,\varphi)\in SRE^+(I_\varphi+\{(\varphi,w)\},\emptyset,\pr$-$\true)$; and\\
    - $(\varphi,X,Y)\in\Pi_2SAT$ iff $(\varphi,w)\in SRE^-(I_\varphi-\{(\varphi,\varphi)\},\emptyset,\pr$-$\true)$.
\end{proposition}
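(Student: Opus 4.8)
The plan is to route both equivalences through Proposition~\ref{srel_alt}, which converts a strong-relevance question under $\pr$ into a $PosVer_\pr$ question. Observe first that the two operations involved commute and produce the same framework: writing $J=(I_\varphi+\{(\varphi,w)\})-\{(\varphi,\varphi)\}=(I_\varphi-\{(\varphi,\varphi)\})+\{(\varphi,w)\}=\langle\A,\{x_i\mid x_i\in X\},\R\cup\{(\varphi,w)\},\emptyset\rangle$, the attack $(\varphi,w)$ is certain, $(\varphi,\varphi)$ is absent, and the only uncertainty left is the set $\{x_i\}$ of positive $X$-literals. By Proposition~\ref{srel_alt}, $(\varphi,\varphi)\in SRE^+(I_\varphi+\{(\varphi,w)\},\emptyset,\pr$-$\true)$ iff $PosVer_\pr(J,\emptyset)=\false$, and likewise $(\varphi,w)\in SRE^-(I_\varphi-\{(\varphi,\varphi)\},\emptyset,\pr$-$\true)$ iff $PosVer_\pr(J,\emptyset)=\false$. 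Hence both claims collapse to the single statement $PosVer_\pr(J,\emptyset)=\false \iff (\varphi,X,Y)\in\Pi_2SAT$, which is what I would establish.

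The heart of the argument is a structural lemma on completions of $J$. A completion of $J$ is determined by the subset $T\subseteq X$ of present positive literals, which I read as the assignment $\tau_X$ with $x_i$ true iff $x_i\in T$; call the corresponding AF $F_{\tau_X}$. I would prove that $\emptyset\in\pr(F_{\tau_X})$ iff $\forall\tau_Y:\varphi[\tau_X,\tau_Y]=\false$. Since $\emptyset$ is always admissible, $\emptyset\in\pr(F_{\tau_X})$ holds exactly when $F_{\tau_X}$ has no nonempty admissible set. The analysis of the gadget runs as follows: $w$ and every $c_i$ self-attack, so they belong to no conflict-free set; every literal argument is attacked by $w$, whose only attacker other than itself is $\varphi$ via $(\varphi,w)$; therefore any nonempty admissible set must contain $\varphi$. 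In turn $\varphi$ is attacked by all the $c_i$ and can be defended only if, for each clause, some literal occurring in it lies in the set. The conflict pattern pins down the admissible literals: $x_i$ may enter only when it is present, $\overline{x_i}$ only when $x_i$ is absent (so that its sole attacker $x_i$ is gone), and at most one of $y_i,\overline{y_i}$ may enter. Reading membership as truth, the literals of such a set realize precisely an assignment that extends $\tau_X$ by a free choice $\tau_Y$ over $Y$, and a clause $c_i$ is attacked iff it is satisfied. Thus $\varphi$ is acceptable, i.e. a nonempty admissible set exists, iff some $\tau_Y$ yields $\varphi[\tau_X,\tau_Y]=\true$, which proves the lemma.

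Quantifying over completions then gives $PosVer_\pr(J,\emptyset)=\true$ iff $\exists\tau_X$ with $\emptyset\in\pr(F_{\tau_X})$, i.e. iff $\exists\tau_X\,\forall\tau_Y:\varphi[\tau_X,\tau_Y]=\false$, i.e. iff $(\varphi,X,Y)\notin\Pi_2SAT$; negating yields the desired equivalence. It remains to discharge the well-definedness hypotheses of Proposition~\ref{srel_alt}, namely that $\emptyset$ is stable-$\pr$-$\true$ in at least one partial completion. For the $SRE^+$ claim I would use the completion of $I_\varphi+\{(\varphi,w)\}$ in which $(\varphi,\varphi)$ is present: there $\varphi$ self-attacks, $w$ is never defeated, no literal is acceptable, and so $\emptyset$ is preferred, giving $PosVer_\pr=\true$. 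For the $SRE^-$ claim I would use the completion of $I_\varphi-\{(\varphi,\varphi)\}$ in which $(\varphi,w)$ is absent: then $\varphi$ cannot defeat $w$, no literal is acceptable, $\varphi$ cannot be defended against its clauses, and again $\emptyset$ is preferred.

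I expect the structural lemma to be the main obstacle, and within it two points deserve care. The first is ruling out every nonempty admissible set that does \emph{not} contain $\varphi$; this rests on the fact that $w$ attacks all literals and that $\varphi$ is $w$'s only external attacker, so nothing can be admitted without $\varphi$. The second is the existential realizability of $\tau_Y$: I must check that any satisfying $Y$-assignment can be assembled into a genuinely conflict-free, self-defending admissible set together with $\varphi$ and the $\tau_X$-forced $X$-literals, using that each included $Y$-literal defends itself against its dual and that $\varphi$ defends all of them against $w$. I would also be careful in the $\overline{x_i}$ case to invoke the \emph{absence} of $x_i$ rather than merely the defeat of $w$, since $w\notin E$ cannot defend $\overline{x_i}$ against a present $x_i$.
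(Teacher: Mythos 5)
Your proof is correct and follows essentially the same route as the paper's: both reduce the strong-relevance claims to $PosVer_\pr$ on the fully specified framework via Proposition~\ref{srel_alt}, establish well-definedness with the same two completions, and show that a completion determined by $\tau_X$ admits a nonempty admissible set iff some $\tau_Y$ satisfies $\varphi$. Your explicit observation that both items collapse to the single question $PosVer_\pr(J,\emptyset)$ on the same framework $J$ is a minor but clean streamlining of the paper's ``the proof of both directions is analogous.''
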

\begin{proof}
    1. Consider the completion $F=cert(I_\varphi+\{(\varphi,w)\}+\{(\varphi,\varphi)\})$. There is no nonempty admissible extension containing $\varphi,w$ and $c_i$ since they are self-attacked. Also, any argument $x_i,\overline{x_i},y_i$ and $\overline{y_i}$ is not contained since they are attacked by $w$ but the unique attacker of $w$ is self-attacked. Therefore, $\emptyset$ is the unique $\pr$-extension of $F$, which means that there is at least one stable-$\pr$-$\true$ partial completion for $\emptyset$ w.r.t. $I_\varphi+\{(\varphi,w)\}$.   
    
    ($\Rightarrow$) Assume $(\varphi,\varphi)\notin SRE^+(I_\varphi+\{(\varphi,w)\},\emptyset,\pr$-$\true)$, i.e., there is a completion $F=\langle\A',\R'\rangle$ of $I_\varphi+\{(\varphi,w)\}-\{(\varphi,\varphi)\}$ such that $\emptyset\in\pr(F)$. Let $\tau_X$ be the assignment for $X$ such that $\tau_X(x_i)=true$ iff $x_i\in\A'$. Since $(\varphi,X,Y)\in\Pi_2SAT$, there is an assignment $\tau_Y$ for $Y$ such that $\varphi[\tau_X,\tau_Y]=\true$ holds. Let $\Omega=\{x_i\mid x_i\in\A'\}\cup\{\overline{x_i}\mid x_i\notin\A'\}\cup\{y_i\mid \tau_Y(y_i)=true\}\cup\{\overline{y_i}\mid \tau_Y(y_i)=false\}\cup\{\varphi\}$. Then we have $\Omega$ is conflict-free and $\{c_i\mid c_i\in\varphi\}\cup\{w\}\subseteq\Omega_F^+$, thus $\Omega\in\sta(F)$ leading $\Omega\in\pr(F)$ to hold, contradicting $\emptyset\in\pr(F)$.

    ($\Leftarrow$) Assume $(\varphi,X,Y)\notin\Pi_2SAT$, i.e., there is an assignment $\tau_X$ for $X$, for any assignment $\tau_Y$ for $Y$, $\varphi[\tau_X,\tau_Y]=\false$ holds. Let $F=\langle\A',\R'\rangle$ be a completion of $I_\varphi+\{(\varphi,w)\}-\{(\varphi,\varphi)\}$ where $x_i\in\A'$ iff $\tau_X(x_i)=true$. Then we can see that there is no non-conflict-free set attacking all of the arguments in $\{c_i\mid c_i\in\varphi\}$, thus $\varphi$ cannot be contained in any admissible extension, yielding the arguments $x_i,\overline{x_i},y_i$ and $\overline{y_i}$ is not contained in any admissible extension in $F$ since they are attacked by $w$ and the unique attacker of $w$ is $\varphi$. Also, the arguments in $\{c_i\mid c_i\in\varphi\}$ and $w$ are not contained since they are self-attacked. Therefore, there is no nonempty admissible extension in $F$ leading $PosVer_\pr(I_\varphi+\{(\varphi,w)\}-\{(\varphi,\varphi)\},\emptyset)=true$ to hold, contradicting $(\varphi,\varphi)\in SRE^+(I_\varphi+\{(\varphi,w)\},\emptyset,\pr$-$\true)$ by Proposition \ref{srel_alt}.

    2. Consider the completion $F=cert(I_\varphi-\{(\varphi,\varphi)\}-\{(\varphi,w)\})$. There is no nonempty admissible extension containing $c_i$ and $w$ since they are self-attacked. Also, any argument $x_i,\overline{x_i},y_i$ and $\overline{y_i}$ is not contained since they are attacked by $w$ but $w$ is not attacked by any other argument. This yields that $\varphi$ is not contained neither since $\varphi$ is attacked by $c_i$ but the attackers of $c_i$ are all not contained in any nonempty admissible extension. Therefore, $\emptyset$ is the unique $\pr$-extension of $F$, which means that there is at least one stable-$\pr$-$\true$ partial completion for $\emptyset$ w.r.t. $I_\varphi-\{(\varphi,\varphi)\}$. 
    
    The proof of both direction is analogous to the above conclusion 1.
\end{proof}
    In conclusion, despite the notion of strong relevance characterizes a stricter type of relevance, the complexity results show that its identification does not become a more difficult problem than identifying relevance.
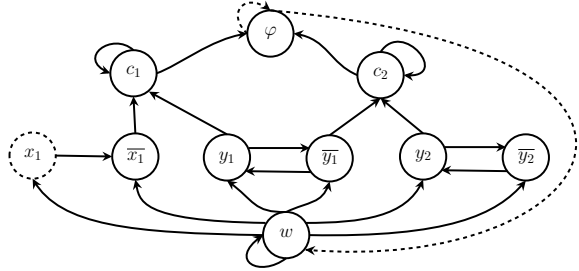
\begin{figure}[t]
\centering 
\tikzset{every picture/.style={line width=0.75pt}} 
\begin{tikzpicture}[x=0.35pt,y=0.35pt,yscale=-1,xscale=1]
\draw   (334,181) .. controls (334,167.19) and (345.19,156) .. (359,156) .. controls (372.81,156) and (384,167.19) .. (384,181) .. controls (384,194.81) and (372.81,206) .. (359,206) .. controls (345.19,206) and (334,194.81) .. (334,181) -- cycle ;
\draw   (188,314) .. controls (188,300.19) and (199.19,289) .. (213,289) .. controls (226.81,289) and (238,300.19) .. (238,314) .. controls (238,327.81) and (226.81,339) .. (213,339) .. controls (199.19,339) and (188,327.81) .. (188,314) -- cycle ;
\draw  [dash pattern={on 1.5pt off 1.5pt}] (77,313) .. controls (77,299.19) and (88.19,288) .. (102,288) .. controls (115.81,288) and (127,299.19) .. (127,313) .. controls (127,326.81) and (115.81,338) .. (102,338) .. controls (88.19,338) and (77,326.81) .. (77,313) -- cycle ;
\draw    (127,313) -- (185,313.95) ;
\draw [shift={(188,314)}, rotate = 180.94] [fill={rgb, 255:red, 0; green, 0; blue, 0 }  ][line width=0.08]  [draw opacity=0] (10.72,-5.15) -- (0,0) -- (10.72,5.15) -- (7.12,0) -- cycle    ;
\draw   (398,316) .. controls (398,302.19) and (409.19,291) .. (423,291) .. controls (436.81,291) and (448,302.19) .. (448,316) .. controls (448,329.81) and (436.81,341) .. (423,341) .. controls (409.19,341) and (398,329.81) .. (398,316) -- cycle ;
\draw   (287,315) .. controls (287,301.19) and (298.19,290) .. (312,290) .. controls (325.81,290) and (337,301.19) .. (337,315) .. controls (337,328.81) and (325.81,340) .. (312,340) .. controls (298.19,340) and (287,328.81) .. (287,315) -- cycle ;
\draw    (335.67,304.83) -- (398.67,304.83) ;
\draw [shift={(401.67,304.83)}, rotate = 180] [fill={rgb, 255:red, 0; green, 0; blue, 0 }  ][line width=0.08]  [draw opacity=0] (10.72,-5.15) -- (0,0) -- (10.72,5.15) -- (7.12,0) -- cycle    ;
\draw    (402.67,330.83) -- (334.67,329.88) ;
\draw [shift={(331.67,329.83)}, rotate = 0.81] [fill={rgb, 255:red, 0; green, 0; blue, 0 }  ][line width=0.08]  [draw opacity=0] (10.72,-5.15) -- (0,0) -- (10.72,5.15) -- (7.12,0) -- cycle    ;
\draw   (610,315) .. controls (610,301.19) and (621.19,290) .. (635,290) .. controls (648.81,290) and (660,301.19) .. (660,315) .. controls (660,328.81) and (648.81,340) .. (635,340) .. controls (621.19,340) and (610,328.81) .. (610,315) -- cycle ;
\draw   (499,314) .. controls (499,300.19) and (510.19,289) .. (524,289) .. controls (537.81,289) and (549,300.19) .. (549,314) .. controls (549,327.81) and (537.81,339) .. (524,339) .. controls (510.19,339) and (499,327.81) .. (499,314) -- cycle ;
\draw    (547.67,303.83) -- (610.67,303.83) ;
\draw [shift={(613.67,303.83)}, rotate = 180] [fill={rgb, 255:red, 0; green, 0; blue, 0 }  ][line width=0.08]  [draw opacity=0] (10.72,-5.15) -- (0,0) -- (10.72,5.15) -- (7.12,0) -- cycle    ;
\draw    (614.67,329.83) -- (546.67,328.88) ;
\draw [shift={(543.67,328.83)}, rotate = 0.81] [fill={rgb, 255:red, 0; green, 0; blue, 0 }  ][line width=0.08]  [draw opacity=0] (10.72,-5.15) -- (0,0) -- (10.72,5.15) -- (7.12,0) -- cycle    ;
\draw   (186,224) .. controls (186,210.19) and (197.19,199) .. (211,199) .. controls (224.81,199) and (236,210.19) .. (236,224) .. controls (236,237.81) and (224.81,249) .. (211,249) .. controls (197.19,249) and (186,237.81) .. (186,224) -- cycle ;
\draw    (351,399) .. controls (208.83,396.87) and (131.99,389.07) .. (103.28,340.26) ;
\draw [shift={(102,338)}, rotate = 61.44] [fill={rgb, 255:red, 0; green, 0; blue, 0 }  ][line width=0.08]  [draw opacity=0] (10.72,-5.15) -- (0,0) -- (10.72,5.15) -- (7.12,0) -- cycle    ;
\draw    (211,199) .. controls (188.13,178.26) and (148.3,207.61) .. (183.74,223.07) ;
\draw [shift={(186,224)}, rotate = 201.09] [fill={rgb, 255:red, 0; green, 0; blue, 0 }  ][line width=0.08]  [draw opacity=0] (10.72,-5.15) -- (0,0) -- (10.72,5.15) -- (7.12,0) -- cycle    ;
\draw   (453,226) .. controls (453,212.19) and (464.19,201) .. (478,201) .. controls (491.81,201) and (503,212.19) .. (503,226) .. controls (503,239.81) and (491.81,251) .. (478,251) .. controls (464.19,251) and (453,239.81) .. (453,226) -- cycle ;
\draw    (478,201) .. controls (502.3,164.39) and (555.05,229.82) .. (505.34,226.21) ;
\draw [shift={(503,226)}, rotate = 6.2] [fill={rgb, 255:red, 0; green, 0; blue, 0 }  ][line width=0.08]  [draw opacity=0] (10.72,-5.15) -- (0,0) -- (10.72,5.15) -- (7.12,0) -- cycle    ;
\draw   (351,399) .. controls (351,385.19) and (362.19,374) .. (376,374) .. controls (389.81,374) and (401,385.19) .. (401,399) .. controls (401,412.81) and (389.81,424) .. (376,424) .. controls (362.19,424) and (351,412.81) .. (351,399) -- cycle ;
\draw    (213,289) -- (211.15,252) ;
\draw [shift={(211,249)}, rotate = 87.14] [fill={rgb, 255:red, 0; green, 0; blue, 0 }  ][line width=0.08]  [draw opacity=0] (10.72,-5.15) -- (0,0) -- (10.72,5.15) -- (7.12,0) -- cycle    ;
\draw    (312,290) -- (230.3,245.27) ;
\draw [shift={(227.67,243.83)}, rotate = 28.7] [fill={rgb, 255:red, 0; green, 0; blue, 0 }  ][line width=0.08]  [draw opacity=0] (10.72,-5.15) -- (0,0) -- (10.72,5.15) -- (7.12,0) -- cycle    ;
\draw    (524,289) -- (480.31,252.91) ;
\draw [shift={(478,251)}, rotate = 39.56] [fill={rgb, 255:red, 0; green, 0; blue, 0 }  ][line width=0.08]  [draw opacity=0] (10.72,-5.15) -- (0,0) -- (10.72,5.15) -- (7.12,0) -- cycle    ;
\draw    (423,291) -- (475.57,252.76) ;
\draw [shift={(478,251)}, rotate = 143.97] [fill={rgb, 255:red, 0; green, 0; blue, 0 }  ][line width=0.08]  [draw opacity=0] (10.72,-5.15) -- (0,0) -- (10.72,5.15) -- (7.12,0) -- cycle    ;
\draw    (353.67,385.83) .. controls (215.83,383.76) and (217.23,361.93) .. (213.57,341.83) ;
\draw [shift={(213,339)}, rotate = 77.37] [fill={rgb, 255:red, 0; green, 0; blue, 0 }  ][line width=0.08]  [draw opacity=0] (10.72,-5.15) -- (0,0) -- (10.72,5.15) -- (7.12,0) -- cycle    ;
\draw    (376,374) .. controls (340.57,374.81) and (341.59,361.52) .. (314.17,341.55) ;
\draw [shift={(312,340)}, rotate = 35.08] [fill={rgb, 255:red, 0; green, 0; blue, 0 }  ][line width=0.08]  [draw opacity=0] (10.72,-5.15) -- (0,0) -- (10.72,5.15) -- (7.12,0) -- cycle    ;
\draw    (401,399) .. controls (551.59,401.78) and (603.59,376.86) .. (633.21,342.14) ;
\draw [shift={(635,340)}, rotate = 129.3] [fill={rgb, 255:red, 0; green, 0; blue, 0 }  ][line width=0.08]  [draw opacity=0] (10.72,-5.15) -- (0,0) -- (10.72,5.15) -- (7.12,0) -- cycle    ;
\draw    (376,374) .. controls (400.64,362.32) and (414.53,365.54) .. (422.09,343.84) ;
\draw [shift={(423,341)}, rotate = 106.45] [fill={rgb, 255:red, 0; green, 0; blue, 0 }  ][line width=0.08]  [draw opacity=0] (10.72,-5.15) -- (0,0) -- (10.72,5.15) -- (7.12,0) -- cycle    ;
\draw    (396.67,385.83) .. controls (458.41,385.83) and (505.74,379.11) .. (522.98,341.35) ;
\draw [shift={(524,339)}, rotate = 112.3] [fill={rgb, 255:red, 0; green, 0; blue, 0 }  ][line width=0.08]  [draw opacity=0] (10.72,-5.15) -- (0,0) -- (10.72,5.15) -- (7.12,0) -- cycle    ;
\draw  [dash pattern={on 1.5pt off 1.5pt}]  (359,156) .. controls (863.67,199.83) and (719.33,442.67) .. (396.67,414.83) ;
\draw [shift={(396.67,414.83)}, rotate = 4.93] [fill={rgb, 255:red, 0; green, 0; blue, 0 }  ][line width=0.08]  [draw opacity=0] (10.72,-5.15) -- (0,0) -- (10.72,5.15) -- (7.12,0) -- cycle    ;
\draw  [dash pattern={on 1.5pt off 1.5pt}]  (334,181) .. controls (311.25,160.36) and (329.69,135.13) .. (356.89,154.43) ;
\draw [shift={(359,156)}, rotate = 218.04] [fill={rgb, 255:red, 0; green, 0; blue, 0 }  ][line width=0.08]  [draw opacity=0] (10.72,-5.15) -- (0,0) -- (10.72,5.15) -- (7.12,0) -- cycle    ;
\draw    (236,224) .. controls (261.02,215.06) and (293.97,187.27) .. (331.13,181.41) ;
\draw [shift={(334,181)}, rotate = 172.81] [fill={rgb, 255:red, 0; green, 0; blue, 0 }  ][line width=0.08]  [draw opacity=0] (10.72,-5.15) -- (0,0) -- (10.72,5.15) -- (7.12,0) -- cycle    ;
\draw    (453,226) .. controls (422.45,213.16) and (421.69,187.17) .. (386.76,181.4) ;
\draw [shift={(384,181)}, rotate = 7.31] [fill={rgb, 255:red, 0; green, 0; blue, 0 }  ][line width=0.08]  [draw opacity=0] (10.72,-5.15) -- (0,0) -- (10.72,5.15) -- (7.12,0) -- cycle    ;

\draw    (376,424) .. controls (347.58,445.56) and (312.44,426.98) .. (348.69,400.62) ;
\draw [shift={(351,399)}, rotate = 145.78] [fill={rgb, 255:red, 0; green, 0; blue, 0 }  ][line width=0.08]  [draw opacity=0] (10.72,-5.15) -- (0,0) -- (10.72,5.15) -- (7.12,0) -- cycle    ;

\draw (359.22,179.69) node  [font=\Large,xscale=0.5,yscale=0.5]  {$\varphi $};
\draw (213.66,312.69) node  [font=\Large,xscale=0.5,yscale=0.5]  {$\overline{x_{1}}$};
\draw (102.43,311.69) node  [font=\Large,xscale=0.5,yscale=0.5]  {$x_{1}$};
\draw (423.66,314.69) node  [font=\Large,xscale=0.5,yscale=0.5]  {$\overline{y_{1}}$};
\draw (312,315) node  [font=\Large,xscale=0.5,yscale=0.5]  {$y_{1}$};
\draw (635.66,313.69) node  [font=\Large,xscale=0.5,yscale=0.5]  {$\overline{y_{2}}$};
\draw (524.43,312.69) node  [font=\Large,xscale=0.5,yscale=0.5]  {$y_{2}$};
\draw (211.22,221.69) node  [font=\Large,xscale=0.5,yscale=0.5]  {$c_{1}$};
\draw (478.22,223.69) node  [font=\Large,xscale=0.5,yscale=0.5]  {$c_{2}$};
\draw (376.22,396.69) node  [font=\Large,xscale=0.5,yscale=0.5]  {$w$};
\end{tikzpicture}
\caption{The IAF created from the $\Pi_2SAT$ instance $((\overline{x_1}\vee y_1)\wedge(\overline{y_1}\vee y_2),\{x_1\},\{y_1,y_2\})$.}
\label{reduce_srel_true}
\vspace{-5mm}
\end {figure}
\section{Related Work and Discussion}
    The notion of relevance is first proposed in \cite{odekerken2023justification} where relevance for stability of justification status of a given argument is studied. We follow that work and study the relevance for stability of verification status of a given set of arguments in this paper. Table \ref{tab1} lists part of complexity results from \cite{odekerken2023justification} and our results about relevance and strong relevance. Compared with that all of the complexities for stability of justification are up to exponential level, we identify tractable methods for solving relevance problems for stability of verification under most considered semantics, showing that they are in \textit{P} complexity level. Particularly, in Theorem \ref{co_rel_theo} for deciding relevance for stability of a given set of arguments $S$ under $\co$-semantics, we divide the uncertain attacks into three separate parts, the attacks towards $S$, from $S$ and not related to $S$, and then give methods accordingly to tackle the relevance problems. Such a dividing way may also benefit solving relevance problem of justification, for instance, in order obtain $\sigma$-$in$-$credulous$($skeptical$)-relevant elements for a given argument $a$, one can divide the uncertain arguments and attacks into two parts, those that are able to reach $a$ (through both certain and uncertain attacks) and those not. Then we can directly conclude that those not able to reach $a$ are irrelevant as long as the semantics discussed satisfies {\it directionality} criterion \cite{baroni2007principle}. Such a dividing intuition is also applied in \cite{liao2011dynamics} for reasoning in dynamic AFs.
    
    Besides the distinction in complexity, the relevance problems discussed in \cite{odekerken2023justification} and this paper hold different properties. Recall that -$\true$-relevance of addition coincides with -$\false$-relevance of removal for verification of a set of arguments. Such a property does not hold for every two of -$in$,-$out$ and -$undec$-$credulous$($skeptical$)-relevance for justification of an argument, because for instance the status `not in' does not coincide with either `out' or `undec'. Lastly, it is easy to see that the notion of strong relevance proposed in this paper can be adapted to justification of an argument. The preliminary results from a first attempt of ours show that the computational complexity of deciding strong relevance tends to be lower than relevance for justification of an argument, which will be elaborated in our future work. This is different from the verification of a set of arguments where relevance and strong relevance hold the same complexity in \textit{P} for the most discussed semantics, as shown in Table \ref{tab1}.
    
    We have not given the precise complexity result for $\gr$ semantics. It turns out to be difficult to find tractable methods for solving $\gr$-\relevance\ problem, especially for deciding the relevance of attacks related to a given set of arguments $S$. In order to decide the $\gr$-\relevance\ of an attack $r=(a,b)$ where $a,b\notin S$, we can use the same method for $\co$ in Theorem \ref{co_rel_theo}, i.e., $r\in RE^+(I,S,\gr$-$\true)$ iff $r\in RE^+(I,S,\co$-$\true)$. However, for attacks whose attackers or targets belong to $S$, the method of Theorem \ref{co_rel_theo} cannot be used to decide $\gr$-\relevance. This is because $\gr$ semantics requires {\it strong admissibility} \cite{baroni2007principle,caminada2020strong}, characterizing that the accepted set $S$ of arguments is able to be constructed from an initial set of arguments containing those in $S$ that are not attacked by any other arguments, and then repeatedly merging the arguments defended by the current set until there are no arguments that can be merged. Such a strong admissibility requirement leads deciding relevance for attacks related to $S$ under $\gr$ to be more difficult than under $\co$. For instance, consider the IAF $I=\langle\{a,b\},\emptyset,\{(a,b)\},\{(b,a)\}\rangle$. We can see that removal of $(b,a)$ is not $\co$-$\true$-relevant but $\gr$-$\true$-relevant for the set $\{a\}$. We leave the precise complexity of $\gr$-\relevance\ for future work.

    Stability decision problem in IAFs is first proposed in \cite{mailly2020stability} focusing on stability of acceptable status of an argument and related complexity results are given in \cite{baumeister2021acceptance}. Later a finer-grained acceptable status named justification status is introduced and the related stability problem is studied in \cite{odekerken2023justification}. Besides, {\it functionality} studied in \cite{alfano2022incomplete} presents a more strictly acceptable status for a single argument, which specifies whether an argument can be firmly decided in any completion. The stability of verification can be viewed as possible and necessary verification problems studied in \cite{baumeister2018verification,fazzinga2020revisiting}. In addition to stability of an single argument and a set of arguments, in our previous work \cite{xiong2024stability} we consider the stability of the whole extensions, i.e., an IAF is said to be stable for its extensions if any completion outputs the same extensions under certain semantics. Also focusing on the whole extensions of a semantics, \cite{skiba2020complexity} studies whether there exists a nonempty extension in some (or any) completion of an IAF. One can see that the notion of relevance and strong relevance can also be extended to these types of stability so as to solve their related problems about reaching stability.
    
    Detecting relevance and strong relevance can be applied to {\it enforcement} problems \cite{baumann2010expanding}, especially {\it general enforcement} proposed in \cite{coste2015extension}, which asks how to add arguments or modify attacks in the current AF to enable a set of arguments to become an extension. A dynamic AF under constrained update can be modeled as an IAF, where uncertain arguments represent the new arguments allowed to be added and uncertain attacks represent the original (un)attacks which are questionable. One can see that if addition or removal of an element $e$ is not -$\true$-relevant for the desired set of arguments $S$ w.r.t. the IAF, then such a modification is not included in any {\it minimal change} solution \cite{coste2015extension} for the enforcement of $S$. On the other hand, if addition or removal of $e$ is strongly -$\true$-relevant, then such a modification is necessary to be conducted in any minimal change. Also considering modification of one element $e$, the preservation problems \cite{rienstra2015persistence,rienstra2020principle} ask whether the verification status of an extension $S$ will remain after modification of $e$. Although they appear similar to the concept of `irrelevant' in this paper, they are essentially different because $e$ being irrelevant means that $S$ is preserved under modification of $e$ in all of the possible situations that would occur in the future (i.e., all of the completions of the given IAF). In \cite{oikarinen2011characterizing,baumann2012normal}, an operation named {\it $\sigma$-kernel} on AFs is proposed, which essentially selects and removes attacks useless for semantics $\sigma$ of an AF, i.e., the attacks whose modification has no influence on the output of $\sigma$-extensions of the AF. Actually, the operations for different semantics proposed in \cite{oikarinen2011characterizing,baumann2012normal} can be extended to IAFs and the attacks selected are irrelevant for verification of any set of arguments.  
\begin{table}[t]
    \centering
    \caption{Complexity of various types of relevance in IAFs.} 
    \begin{tabular}{|P{0.5cm}|P{2cm}|P{2cm}|P{2.5cm}|P{2.5cm}|}
        \hline
        \multirow{2}{*}{$\sigma$} & \multicolumn{2}{c|}{relevance for justification \cite{odekerken2023justification}} & relevance for verification & strong relevance for verification\\ \cline{2-3}
        \multirow{2}{*}{} & credulous-in & skeptical-in  & & \\ 
        \hline
        $\ad$ & $\Sigma_2^p$-$c$ & trivial & $P$ & $P$ \\ \hline 
        $\sta$ & $\Sigma_2^p$-$c$ & $\Sigma_2^p$-$c$  & $P$ & $P$ \\ \hline 
        $\co$ & $\Sigma_2^p$-$c$ & $NP$-$c$ & $P$ & $P$ \\ \hline 
        $\gr$ & $NP$-$c$ & $NP$-$c$ & in $NP$ & $P$ \\ \hline 
        $\pr$ & $\Sigma_2^p$-$c$ & $\Sigma_3^p$-$c$ & $\Sigma_2^p$-$c$ & $\Pi_2^p$-$c$, $coNP$-$c$\\ 
        \hline
    \end{tabular}
    \vspace{-6mm}
    \label{tab1} 
\end{table}
\section{Conclusions}
    This paper explores the relevance and strong relevance problems for stability of verification status of a set of arguments in IAFs. A complexity analysis of the decision problems of both kinds of relevance is conducted and all of the common semantics are given precise complexity except $\gr$ semantics. Such relevance problems are distinct from the relevance for justification of an argument in \cite{odekerken2023justification}, the original work where the relevance for stability is proposed and the study in this paper follows so as to tackle which uncertainties should be resolved to reach stability for verification. For future work, we will continue studying precise complexity for $\gr$ semantics, and extend the notion of (strong) relevance to other types of stability in IAFs including stability of the whole extensions.
    
\bibliographystyle{splncs04}
\bibliography{mybibliography}

%






\end{document}